\theoremstyle{definition}
\newtheorem{defn}{Definition}[section]
\newtheorem{thm}{Theorem}[section]
\newtheorem{lem}[thm]{Lemma}
\theoremstyle{remark}
\newtheorem{remark}{Remark}[section]
\author{Ilana Segall and Alfred Bruckstein}  \date{Center for Intelligent Systems\\MultiAgent Robotic Systems (MARS) Lab\\Computer Science Department\\
Technion, Haifa 32000, Israel\\
\today}
\title{ Guidance of Agents in Cyclic Pursuit\thanks{This research was partly supported by Technion Autonomous Systems Program (TASP)} }
 \newtheorem{theorem}{\textit{Theorem}}
 \newtheorem{lemma}{\textit{Lemma}}
\newtheorem{corollary}{\textit{Corollary}}
\newtheorem{definition}{\textit{Definition}}
\begin{document}
\maketitle
\newpage

\tableofcontents

\newpage

\begin{abstract}
This report studies the emergent behavior of systems of agents performing  cyclic pursuit controlled by an external  broadcast  signal detected by a random set of the agents. Two types of cyclic pursuit are analyzed: 1)linear cyclic pursuit, where each agent senses the relative position of its target or leading agent 2)non-linear cyclic pursuit, where the agents can sense only bearing to their leading agent and colliding agents merge and continue on the  path of the pursued agent (a so-called "bugs" model). Cyclic pursuit is, in both cases, a gathering algorithm, which has been previously analyzed. The novelty of our work is the derivation of emergent behaviours, in both linear and non-linear cyclic pursuit, in the \emph{ presence of an exogenous broadcast control detected by a random subset} of agents. We show that the emergent behavior of the swarm depends on the type of cyclic pursuit. In the linear case, the agents asymptotically align in the desired direction and move with a common speed which is a proportional to the ratio of the number of agents detecting the broadcast control to the total number of agents in the swarm, for any magnitude of input (velocity) signal. In the non-linear case, the agents gather and move with a shared velocity, which equals the input velocity signal, independently of the number of agents detecting the broadcast signal.

\end{abstract}
\textbf{Keywords}:
 cyclic pursuit,  broadcast  control,  random leaders,  emergent behavior 

\newpage

\section*{Symbols and Abbreviations}
$n$ - Number of agents\\
$p_i$ - position of agent $i$, $p_i \in \mathbb{R}^2$\\
$x_i$ - $x$ coordinate of $p_i$\\
$y_i$ - $y$ coordinate of $p_i$\\
$P$ - vector of stacked positions of all agents, $P=(p_1,...., p_n)^T$\\
$X$ - vector of $x$ coordinates, $X=(x_1, \dots, x_n)^T$\\
$Y$ - vector of $y$ coordinates, $Y=(y_1, \dots, y_n)^T$\\
$u_i$ - local gathering control applied by agent $i$, $u_i \in \mathbb{R}^2$\\
$U_c$ - external broadcast control, $U_c \in \mathbb{R}^2$\\
$b_i$ - flag indicating whether agent $i$ detected the broadcast control, $1/0$\\
$B$ - vector indicator of agents detecting the broadcast control, $B(i)=b_i$\\
$N^l$ - the set of agents detecting the broadcast control\\
$n^l$ - the number of agents detecting the broadcast control, $n^l = |N^l|$\\
$\|v\|$ - Euclidean norm of vector $v$\\
$|s|$ - absolute value of scalar $s$\\
$d_i$ - distance of agent $i$ from $i+1$, $d_i=\|p_i-p_{i+1}\|$\\
$\overline{(.)}$, conjugate of $(.)$, scalar or vector\\
$v^*$ - transpose conjugate of vector $v$, $v^*=(\overline{v})^T$\\
$\mathbf{1}_n$ - vector of ones, size $n$\\
$\mathbf{0}_n$ - vector of zeroes, size $n$\\

\newpage
\section{Introduction}
In this work we consider the behaviour of $n$ agents performing cyclic pursuit, when an exogenous velocity control is broadcast by a controller and detected by a random set of agents. The cyclic pursuit problem is formulated as $n$ agents chasing each other.  The agents are ordered
from 1 to $n$, and agent $i$  acquires information about its leading agent (prey) $i + 1$. The agent indices are (modulo $n$) throughout this paper. The agents start
from  arbitrary positions on a plane.

 In graph theoretic terms,  cyclic pursuit can be represented by a directed cycle graph, whose nodes are
the agents and the directed edges depict the information flow, as shown in Fig. \ref{fig:Cyclic flow}.
\begin{figure}[ht]
\centering
\includegraphics[scale=0.4]{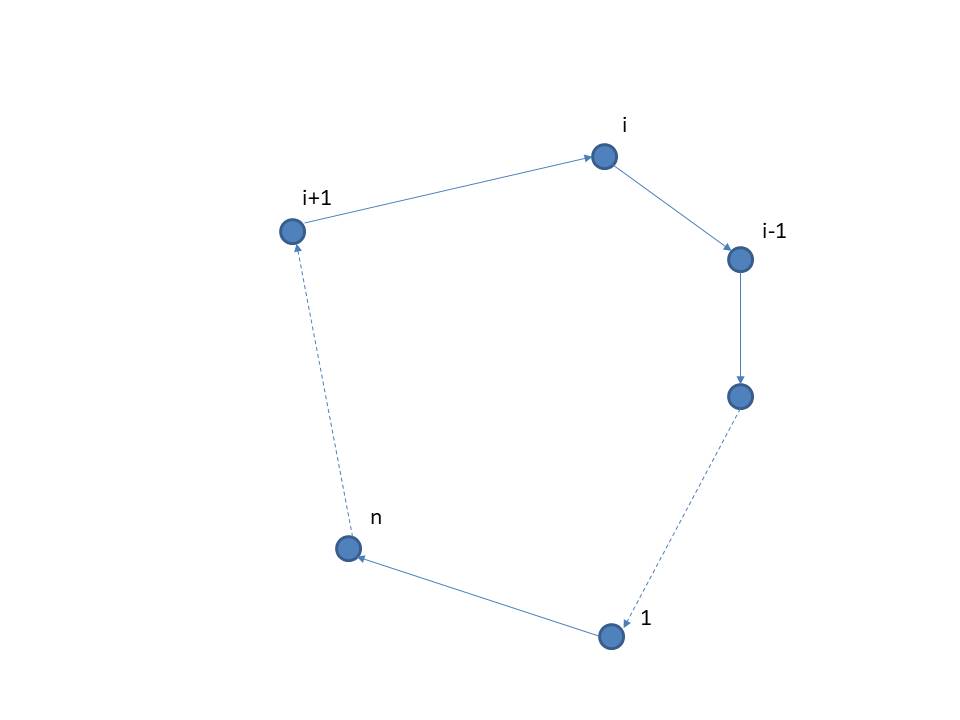}
\caption{Information flow in cyclic pursuit}
\label{fig:Cyclic flow}
\end{figure}

In this work we consider two types of information possibly acquired by the agents
\begin{enumerate}
\item Relative position to the chased "target" agent 
\item Bearing only information, i.e. direction to the target
\end{enumerate}
Let $p_i(t)$ be the position of agent $i$ at time $t$;  $p_i(t) \in \mathbb{R}^2$.  We assume the agents to be identical, memory-less particles, modeled as single integrators.
\begin{enumerate}
\item In case of relative position information, the autonomous kinematics of agent $i$ can be expressed as 
\begin{equation}\label{dot_p_i-h}
\dot{p}_i(t) = k (p_{i+1}(t) - p_i(t); \text{  }k>0  \quad i=(1,...n)
\end{equation}
In the sequel we assume, without loss of generality, $k=1$.  It's easy to see that in this case (\ref{dot_p_i-h}) can be decoupled into 
\begin{eqnarray}
\dot{x}_i(t) & = & x_{i+1}(t)-x_i(t)\label{dot-x_i}\\
\dot{y}_i(t) & = & y_{i+1}(t)-y_i(t)\label{dot-y_i}
\end{eqnarray}
Thus, we can consider only the $x$ coordinates and obtain results for the $y$ coordinates by similarity. 
Let $X(t)=(x_1(t), \dots, x_n(t))^T$. Then, from (\ref{dot-x_i}), we have
\begin{equation}\label{dot-X}
\dot{X}(t) = M X(0)
\end{equation}
Eq. (\ref{dot-X}) is a linear system, where $M$ is the circulant matrix (\ref{eq-M-1}). 
   \begin{equation}\label{eq-M-1}
   M  = circ[ -1, 1, 0, 0, \hdots, 0]=
          \begin{bmatrix}
           -1 & 1 & 0 & 0 & \hdots & 0 \\
            0 & -1 & 1 & 0 & \hdots & 0 \\
	   &&\vdots \\
            1 & 0  && \hdots& 0  & -1 \\
          \end{bmatrix}
  \end{equation}
This system shall be referred to in the sequel as \emph{linear cyclic pursuit}.
 
\item Bearing only information flow generates a non-linear cyclic pursuit. Let $d_i(t)$ be the distance between agents $i$ and $i+1$ at time $t$:
\begin{equation}\label{eq-d_i-0}
d_i(t)=\|p_{i+1}(t) - p_i(t)\|
\end{equation}
where $\|.\|$ represents the Euclidean norm. 
Then, if $d_i(t) \neq$ 0 the law of bearing-only autonomous motion can be written as
\begin{equation*}
\dot{p}_i(t) = \frac{p_{i+1}(t)-p_i(t)}{d_i(t)}
\end{equation*}
where we assumed the speed of all agents to be 1. Moreover, we assume that if $d_i(\hat{t}) = 0$ at some time $\hat{t}$, then $p_i(t)  =  p_{i+1}(t) \text{  for all  } t \geq \hat{t}$, i.e.   
when agents $i$ and $i+1$ collide they merge and continue as one agent in the direction of $i+1$.   This model is known in the literature as the "bugs" model, see e.g. \cite{Richardson}, \cite{KN} and references therein .
\end{enumerate}

The novelty of our work is the derivation of emergent behaviours, in both linear and non-linear cyclic pursuit, in the \emph{ presence of an exogenous broadcast velocity signal detected by a random subset} of agents.
The impact of the external velocity signal on the movement of agents in the linear case is shown in section \ref{LinCyclic} and for the non-linear case in section \ref{NonLinCyclic}.  

\subsection{Literature survey and our contribution}

\subsubsection{Linear cyclic pursuit }
Autonomous linear cyclic pursuit,  belongs to the larger class of networks with directed, fixed topology graphs, denoted by $G$, where each node applies protocol (\ref{gen-u_i})
\begin{eqnarray}
\dot{x}_i(t) &=& u_i(t)\label{gen-u_i-0}\\
u_i(t)&=&\sum_{j \in N_i} a_{ij}(x_j(t)-x_i(t)); \quad i \in \{1,\dots, n\}\label{gen-u_i}
\end{eqnarray}
where $N_i$ is the neighborhood of $i$, defined as $N_i = \{j \in  \{1,\dots, n\} \text{:  }a_{ij}>0 \}$. In our case $N_i=\{i+1 \text{  mod}(n)\}$, $a_{ij}=1 \forall i$ and $G$ is as depicted by Fig. \ref{fig:Cyclic flow}.  
Olfati-Saber, Fax and Murray  show in \cite{OS-M2003}, \cite{OS-M-TR2003} that a network of integrators with
directed information flow, $G$,  that is strongly
connected, using Protocol (\ref{gen-u_i}), yields the following results:
\begin{itemize}
\item It globally asymptotically solves an agreement problem, i.e. $x_i(t \rightarrow \infty) = x_j(t \rightarrow \infty) = \alpha; \quad \forall i,j $, [see Proposition 2 in \cite{OS-M2003})
\item A sufficient condition
for $\alpha = Avg(x(0))$, i.e. the agreement to be the average agreement, is $\sum_{i=1}^n u_i =0$.\\ 
Note that if $G$ is undirected and symmetric, i.e. $a_{ij}= a_{ji}$, then the condition $\sum_{i=1}^n u_i =0$  automatically holds and $Avg(x(t))$ is an invariant quantity, see \cite{Saber}
\item $G$ globally asymptotically solves the
average-consensus problem using Protocol (\ref{gen-u_i}) if and only if $G$ is balanced.
 
\end{itemize}
 
Recalling that
\begin{itemize}
 \item A digraph is called strongly connected if for every pair of vertices there is a \emph{directed path} between them.
   \item A node is called balanced if  the total weight of edges entering the node and leaving the same node are equal
    \item If all nodes in the digraph are balanced then the digraph is called balanced
\end{itemize}
 
We observe that the circular flow graph, depicted in Fig.  \ref{fig:Cyclic flow}, is strongly connected, balanced and,  using $u_i(t) =x_{i+1}(t)-x_i(t)$, satisfies $\sum_{i=1}^n u_i =0$, the system described by \ref{dot-x_i}, \ref{dot-y_i} solves the average consensus problem.

Addressing specifically the problem of linear cyclic pursuit, other researchers derived similar results.
 Bruckstein et al., in \cite{Bruck}, see Section "Linear Insects", showed that for every initial condition, the agents exponentially converge to a single point, computable from the initial conditions of the agents.
 Marshal, in \cite{MarshallPhD} Section 2.2.6, offers an alternate proof for the same.  

  If the agents apply heterogeneous gains to the   $u_i$, i.e. $u_i(t)=k_i(p_{i+1}(t) - p_i(t)); \text{  } k_i > 0 \text{  } \forall i $,  then the point of convergence of the agents can be controlled  by selecting these gains, as shown in \cite{S-G-TR}. Moreover, if convergence to a point is achievable, then other formations are achievable by a simple modification, where
each agent pursues a displaced version of the next agent, as discussed in  \cite{LBF2003}.

All of the above studies consider zero input or autonomous systems in cyclic pursuit. Our main contribution is in the addition of an external broadcast control, detected by a \emph{random set} of agents,  and the derivation of the asymptotic behaviour of the system in this case. 
 Ren, Beard, and McLain, in \cite{RBMcL}, consider the problem of dynamic consensus, which at first glance is similar to ours but is only a simple, special case, of our paradigm and results. Applying their general graph case to the cyclic pursuit case, the update law they apply is
 \begin{equation*}
 \dot{x}_i (t)=(x_{i+1} (t)-x_i (t))+U(t)
 \end{equation*}
i.e. a \emph{common input} $U(t)$ is applied at time $t$ to \emph{all} agents. They show (Theorem 3) that in this case  $\|x(t)-\zeta(t)\| \xrightarrow {t \to \infty} 0$, where $\zeta(t)$ is the integral of $U(t)$ starting at the equilibrium point of the autonomous, zero input system. Moore and Lucarelli,  \cite{M-L} consider the case where a separate
input enters each agent. However, they limit their analysis specifically
the case where  an input enters only one node (or agent), say $k$. Moreover, the input is not a general velocity control, as in our paradigm, but attraction of agent $k$ to a goal position, say $x^{sp}$, i.e. 
\begin{equation}\label{u_i-M-L}
  U_i(t)= \begin{cases}
    (x^{sp}-x_i(t)) \quad \text{if  } i=k\\
    0 \quad \text{otherwise}\\
    \end{cases}
\end{equation}
Dimarogonas, Gustavi et al. in  \cite{DGEH}, \cite{GDEH09} also consider the 
global mission of converging to a known destination point, but allow for multiple leaders. 
Leaders are predesignated agents holding the information of the goal destination, thus the external input in this case becomes 
\begin{equation}\label{u_i-D-G}
  U_i(t)= \begin{cases}
    (x^{sp}-x_i(t)) \quad \text{if  } i \in K\\
    0 \quad \text{otherwise}\\
    \end{cases}
\end{equation}
where $K$ is the predesignated set of leaders. 
In \cite{Ren2007}, W. Ren extends the problem of reaching a goal position to that of consensus to a time varying reference state, and shows
necessary and sufficient conditions under which consensus is reached on the time-varying reference state. This is the problem of tracking a time dependent state and not of steering by an external velocity signal received by a random set of agents, as in our case.

In other works, the model is even further from our paradigm. Some assume that the (predesignated) leaders have a fixed state value and do not abide by the agreement protocol. For example,  Jadbabaie et al. in  \cite{JLM} consider Vicsek's discrete model \cite{VC95}, and introduce a leader that moves  with a fixed heading.  
Yet others add \textbf{special agents} to the swarm with the purpose of controlling the collective behavior. In \cite{HLG2006},  \cite{HW2013} these special agents are referred to as  "shills" \footnote{Shill is a decoy who acts as an enthusiastic, internally driven agent, that looks like an ordinary agent,  having the goal to stimulate the participation of others}. The basic local rules of motion of the existing agents in the system are not changed, however the shill does not obey the same local rules  but has a local control of its own, depending on the states of the ordinary agents and a secret goal function.

We recall that in our paradigm the position of the agents is not known to themselves and the leaders are  are not special agents but regular agents, randomly selected from the swarm, obeying the same gathering rule of motion as the remaining agents.  The external input is a velocity signal aimed at steering the swarm in a desired direction and not a goal position for the swarm.  Thus our problem, as well as solution and results, is different from the above discussed cases, covered in the literature.

\subsubsection{Cyclic pursuit with non-linear local control} 
Our paradigm for non-linear cyclic pursuit discussed herein, comprising  sensing direction to prey, chasing along the line of sight with unit speed and capture (merge) upon collision, is commonly referred to in the existing literature as the ‘bugs’ problem, also known as "ants", see e.g.  \cite{Bruck}, \cite{Bruck1993}.  In \cite{Bruck1993} the convergence of $n$ ants in cyclic pursuit to an encounter point is proved. Richardson shows in \cite{Richardson} that the encounter occurs in finite time. In \cite{Bruck}, Bruckstein et. al extend the model
allowing  each bug (ant) to move at different, time dependent speed, $v_i(t)$. They show that integrability of
the speed plays a central role in the emerging behaviour of the swarm. Speed $v_i(t)$ is integrable iff the cumulative distance travelled by ant $i$ at time $t$, $\displaystyle V_i(t) = \int_0^t v_i(s) ds$ holds $V_i(\infty) < \infty $. Constant speeds are not integrable, hence, according to Theorem 1.ii in \cite{Bruck}, the time  of the last ant collision, i.e. termination time, is finite.

The question of simultaneous mutual capture, i.e. the existence of a time $t_c$ such that the distances $d_i(t_c)  \triangleq \|p_{i+1}(t_c) - p_i(t_c)\| =0$ and $d_i(t) > 0 \quad \forall t \in [t_0, t_c)$ for all $i$,  was also investigated.
In 1971 Klamkin and Newman, \cite{KN}, showed that if $n = 3$, the 3 bugs  travel at the same speed and the initial positions
of the bugs are not collinear, then the meeting of the three bugs must be mutual, i.e. all bugs capture their prey simultaneously.
Klamkin and Newman \emph{speculate} that this result generalizes to more bugs. In \cite{BG1979}, Behroozi and Gagnon  prove that it
does indeed generalize to $n = 4$ if the bugs’ initial positions form a convex polygon. Only non-convex configurations 
can give rise to a premature capture but a non-convex configuration cannot evolve from a convex configuration.  Behroozi and Gagnon in \cite{BG1979} generalize \emph{some aspects} of
the proof for the n-bug systems but leave some open questions. Thus, conditions for mutual capture  for n-bug systems, $n > 4$, remain  conjectures supported by simulations, see e.g. \cite{BG1975}.  Richardson shows in \cite{Richardson},  that, in the general case of $n$ bugs in $k$ dimensions, it is \emph{possible} for bugs to capture their prey
without all bugs simultaneously doing so even for non-collinear initial positions, however the probability of a non-mutual capture occurrence is zero.

We note that the simple bugs model is not the only commonly used model for non-linear cyclic pursuit. Another frequently used model for systems in cyclic pursuit is based on higher order agent behaviour, like the unicycle model, describing  wheeled vehicles,  subject to a single non-holonomic constraint, see e.g. \cite{MBF2004}, \cite{Dovrat-TR}, \cite{MBF2004b}, \cite{DK2007}. 

Our contribution is in deriving the emergent behaviour of the "bugs", when an external controller broadcasts a velocity signal which is detected by a \emph{random} set of "bugs" in the group. This problem has not been previously investigated. \cite{DS2014} analyzes the problem of agents, modeled by single integrator dynamic, in bearing-only cyclic pursuit with a \emph{moving target}. 
The moving target can be seen as a leader broadcasting velocity, but this leader is a special purpose agent which does not participate in the cyclic pursuit.   Moreover,  there is a basic assumption that \emph{all} agents in cyclic pursuit detect target's velocity and can  sense the bearing to target. This model is entirely different from our paradigm where each agent can sense the bearing only to its leading agent and the desired velocity is available only to a (random) subset of agents.

\subsection{Paper outline and main results}
The paper is organized as follows:
\begin{itemize}
\item In section \ref{LinCyclic} we derive the emergent behaviour in case of linear cyclic pursuit, with  a broadcast steering control, using properties of linear systems, and show 
that in this case the agents will asymptotically align in the direction of $U_c$ and move asymptotically as a time-independent linear formation with velocity $\displaystyle \frac{n_l}{n}U_c$, where $n_l$ is the number of agents detecting the steering control. In this case there is no restriction on the size of $\|U_c\|$.
\item In section \ref{NonLinCyclic} we derive  the emergent behaviour for the assumed non-linear model (the bugs model with external input detected by a random set of agents) and show that $\|U_c\|$ within  an upper  bound ensures convergence to a moving point. 
In this case, if at least one agent detected the broadcast control,  the agents will all  move, after the mutual capture time, as a single point with velocity $U_c$. 
\end{itemize}
  All the analytically derived results  are illustrated by simulations.

\section{Linear cyclic pursuit with broadcast control}\label{LinCyclic}
In our paradigm, the equation of motion of agents performing linear cyclic pursuit, i.e. sensing relative position to the chased agent, in the presence of a broadcast velocity control, $U_c$, can be written as
\begin{equation}\label{pi-LC1}
\dot{p}_i(t)=p_{i+1}(t)-p_i(t)+b_i(t) U_c(t)
\end{equation}
where
\begin{equation}\label{eq-b_i}
  b_i(t)= \begin{cases}
    1 \quad \text{if agent $i$ detected the external control}\\
    0 \quad \text{otherwise}\\
    \end{cases}
\end{equation}

 Since $p_i=(x_i,y_i)^T$, and $U_c (t)=(U_x(t), U_y(t))^T$,
eq. (\ref{pi-LC1}) evolves independently in the $x$ and $y$ directions, thus we can consider only one component, say $x$. If we aggregate the $x$ position component of all the agents we can write
\begin{equation}\label{x-LC1}
\dot{X}(t)=M X(t)+B(t) U_x(t)
\end{equation}
where
\begin{itemize}
  \item $X=(x_1, x_2,....,x_n)^T$
  \item $M$ is the circulant matrix (\ref{eq-M}) representing the interactions graph of agents in linear cyclic pursuit\\
   \begin{equation}\label{eq-M}
   M  = circ[ -1, 1, 0, 0, \hdots, 0]=
          \begin{bmatrix}
           -1 & 1 & 0 & 0 & \hdots & 0 \\
            0 & -1 & 1 & 0 & \hdots & 0 \\
	   &&\vdots \\
            1 & 0  && \hdots& 0  & -1 \\
          \end{bmatrix}
  \end{equation}

  \item $B(t)$ is the leaders indicator vector at time $t$, i.e. the i'th entry in the vector $B(t)$ is $b_i(t)$ and $b_i(t)$ is defined by (\ref{eq-b_i})
\end{itemize}

  In eq. (\ref{x-LC1}), $M$ is time independent and if we assume  $B(t)$ and $U_x(t)$ to be  piecewise constant, i.e. we assume that the time-line can be divided into intervals, $t \in [t_k, t_{k+1})$, where $B(t) \overset{\Delta}{=} B(t_k)$, and $U_x(t) \overset{\Delta}{=} U_x(t_k)$, $t_k$ is the time of change of leaders or of the broadcast control.  In the sequel we treat each time interval, $[t_k,t_{k+1} )$, separately. Thus, it is convenient to suppress the subscript $k$.  Moreover, it is convenient to denote by $t$  the \emph{relative time since the beginning of the interval} ($t=0$) and  by $x(0)$  the state of the system at this time, by $B$ the leaders indicator during the interval and by $U_x$ the exogenous control during the interval. In each time interval eq. (\ref{x-LC1}) evolves as a linear time independent system, which has the well known solution (cf. \cite{TK})

\begin{equation}\label{eq-pieceDyn}
x(t) = e^{M t} x(0) + \int_{0}^{t} e^{M(t-\tau)} B U_x \mathrm{d}\tau
\end{equation}

We note that $M$ defined by (\ref{eq-M}) is a normal matrix, (Appendix \ref{Normal}), therefore it is unitarily diagonizable, i.e. $M = V \Lambda V^*$, where $V$ is a unitary matrix of eigenvectors, $V^*$ denotes the transpose of the complex conjugate of $V$ and $\Lambda$ is a diagonal matrix of eigenvalues. 

\textbf{Remark}: Due to the circulant structure of $M$, $V$ is the DFT matrix 

In particular, since $M$ is the circulant matrix with $c_0=-1$, $c_1=1$ and $c_k=0; \quad k=2, 3, ...,n-1$ , we have (from  Appendix \ref{circulant}) 
\begin{itemize}
\item the eigenvalues of $M$ can be written as 
\begin{equation}\label{lambda_k}
 \lambda_k=\rho^k-1; \quad k=0,....,n-1
\end{equation}
where  $\displaystyle  \rho \overset{\Delta}{=} e^{-2\pi j/n}$
\item with corresponding eigenvectors 
\begin{equation}\label{v_k}
 v_k=\frac{1}{\sqrt{n}} \left ( 1,\rho^k, \rho^{2k},\hdots, \rho^{k(n-1)}  \right)^T; \quad k \in \left \{0, 1, \hdots, n-1 \right \}
\end{equation}
\end{itemize}

It is easily seen that $\lambda_0=0$, with a corresponding eigenvector $v_0 = \frac{1}{\sqrt{n}} \mathbf{1}_n$, while the remaining eigenvalues, $\lambda_k; \quad k=1,....,n-1 $,  have negative real part.

These features of $M$ constitute the basis for the derivation of the emergent behaviour of swarms in linear cyclic pursuit with broadcast control.

Following the methodology in \cite{S-B-arxiv}, we write eq. (\ref{eq-pieceDyn})  as
\begin{equation}\label{eq-xdyn2}
X(t) = x^{(h)}(t) + x^{(u)}(t)
\end{equation}
where
 \begin{itemize}
   \item $x^{(h)}(t)= e^{M t} X(0)$ represents the zero input, homogeneous, solution
   \item $x^{(u)}(t)=\int_{0}^{t} e^{M (t-\tau)} B U_x \mathrm{d}\tau$ represents the contribution of the exogenous input (broadcast control) to the group dynamics
 \end{itemize}

\subsection{Homogeneous linear cyclic pursuit}\label{H_cyclic}

\begin{equation}\label{eq_xh}
  x^{(h)}(t)= e^{M t} x(0) = V e^{\Lambda t} V^* x(0) = v_0 v_0^T X(0) + \sum_{k=1}^{n-1} v_k e^{\lambda_k t} v_k^*X(0)
\end{equation}
Since $\Re (\lambda_k) <0 $ for $k=1,....,n-1$, we have
\begin{eqnarray*}
 \sum_{k=1}^{n-1} v_k e^{\lambda_k t} v_k^*X(0) & \xrightarrow[t \rightarrow \infty] { } & \mathbf{0}_n\\
 x^{(h)}(t) & \xrightarrow[t \rightarrow \infty] { } &  v_0 v_0^T X(0)= \frac{1}{n} \sum_{i=1}^n x_i(0)\mathbf{1}_n
\end{eqnarray*}

Thus, a homogeneous system of $n$ agents performing linear cyclic pursuit will asymptotically converge to a point, the centroid, determined by the initial conditions, which is a well known result, see e.g. \cite{Bruck}. However, the derivation in this section introduces a methodology which will be useful in the sequel.

\subsection{The effect of the exogenous control}\label{ControlContribution}
Recalling that we consider a time interval where $B, U_x$ are constant, we obtain
 \begin{equation}\label{gen-xu}
   x^{(u)}(t)= \int_{0}^t e^{M(t-\tau)}B U_x d\tau = \int_0^{t} e^{M\nu}B U_x d\nu = \left [ \int_0^{t} e^{M\nu}d\nu    \right ]B U_x
\end{equation}
Using the diagonalization of $M$, as in subsection \ref{H_cyclic}, we have
\begin{itemize}
  \item $M=V \Lambda V^*$, where $V$ is a unitary matrix formed by the eigenvectors of $M$, given by (\ref{v_k}), and $\Lambda$ is a diagonal matrix of the corresponding eigenvalues of $M$, given by (\ref{lambda_k}) 
  \item $ e^{M \nu}= V e^{\Lambda \nu} V^* = \sum_{k=0}^{n-1}e^{\lambda_k \nu} [v_k v_k^*]$

\end{itemize}

Since $M$ has  a single zero eigenvalue and the remaining eigenvalues have negative real part,  we can decompose  $x^{(u)}(t)$  in two parts:
\begin{equation}
  x^{(u)}(t)= x^{(a)}(t)+ x^{(b)}(t)
\end{equation}\label{eq-xu}
where
\begin{itemize}
  \item $x^{(a)}(t)$ is the zero eigenvalue dependent term, representing the movement in the agreement space
  \item $x^{(b)}(t)$ is the remainder, representing the deviation from the agreement space
\end{itemize}

\subsubsection{Movement in the agreement space}
 \begin{equation}\label{xua}
    x^{(a)}(t) = \int_0^t e^{\lambda_0  \nu} \text{} v_0 v^*_0 B U_x d\nu = v_0 v^*_0 B U_x  t = \frac{n_l}{n} \text{} U_x \text{} t\mathbf{1}_n
  \end{equation}
where $\lambda_0=0$, $n_l$ is the number of leaders in the considered time interval and  $\displaystyle v_0=\frac{1}{\sqrt{n}} \mathbf{1}_n$,  $\displaystyle v^*_0=\frac{1}{\sqrt{n}} \mathbf{1}_n^T$ and $\mathbf{1}_n^T B=n_l$.
Recalling that eq. (\ref{xua}) holds also for the $y$ axis, with the corresponding component of $U_c$,  i.e.

\begin{equation}\label{yua}
    y^{(a)}(t) = \frac{n_l}{n} \text{} U_y \text{} t\mathbf{1}_n
  \end{equation}
  we have

 \begin{lem}\label{L-ua}
    A group of $n$ agents performing linear cyclic pursuit, with $n_l$ agents receiving an exogenous velocity vector control $U_c$,  will asymptotically align in the direction of the vector $U_c$ and move with a common speed that is proportional to the ratio of $n_l$ to $n$, i.e. $\displaystyle \frac{n_l}{n} U_c $.
\end{lem}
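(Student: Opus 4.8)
The plan is to assemble Lemma~\ref{L-ua} directly from the decomposition already set up in the excerpt, namely $X(t) = x^{(h)}(t) + x^{(a)}(t) + x^{(b)}(t)$, and to read off the asymptotic motion from the three pieces. The homogeneous part $x^{(h)}(t)$ was shown in Section~\ref{H_cyclic} to converge to the constant vector $\frac{1}{n}\sum_i x_i(0)\,\mathbf{1}_n$, so it contributes nothing to the asymptotic velocity and only a bounded (indeed convergent) offset. The agreement-space part is exactly $x^{(a)}(t) = \frac{n_l}{n}U_x\, t\,\mathbf{1}_n$ by~\eqref{xua}, which is a rigid translation of all agents along the $x$-axis at constant speed $\frac{n_l}{n}U_x$. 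So the entire claim reduces to showing that the deviation term $x^{(b)}(t)$ stays bounded — in fact converges to a constant vector — as $t\to\infty$; if so, then $X(t) - \frac{n_l}{n}U_x t\,\mathbf{1}_n$ converges, which says precisely that every $x_i(t)$ asymptotically moves at common velocity $\frac{n_l}{n}U_x$, and the differences $x_i(t)-x_j(t)$ converge, i.e. the formation becomes time-independent.

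The key computation is therefore the explicit evaluation and bounding of $x^{(b)}(t)$. Using the spectral expansion $e^{M\nu} = \sum_{k=0}^{n-1} e^{\lambda_k \nu} v_k v_k^*$ from Section~\ref{ControlContribution}, I would write
\begin{equation*}
x^{(b)}(t) = \sum_{k=1}^{n-1} \left[\int_0^t e^{\lambda_k \nu}\,d\nu\right] v_k v_k^* B U_x = \sum_{k=1}^{n-1} \frac{e^{\lambda_k t} - 1}{\lambda_k}\, v_k v_k^* B U_x,
\end{equation*}
which is legitimate because $\lambda_k \neq 0$ for $k=1,\dots,n-1$ by~\eqref{lambda_k} and the remark that only $\lambda_0$ vanishes. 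Since $\Re(\lambda_k) < 0$ for each such $k$, the factor $e^{\lambda_k t}\to 0$, hence
\begin{equation*}
x^{(b)}(t) \xrightarrow[t\to\infty]{} -\sum_{k=1}^{n-1} \frac{1}{\lambda_k}\, v_k v_k^* B U_x =: c_x,
\end{equation*}
a finite constant vector depending on $B$ and $U_x$ but not on $t$. Combining, $X(t) \to \left(\frac{1}{n}\sum_i x_i(0) + \text{(const)}\right)\mathbf{1}_n + \frac{n_l}{n}U_x t\,\mathbf{1}_n + c_x$, so all $x_i(t)$ share the asymptotic velocity $\frac{n_l}{n}U_x$ and their mutual offsets tend to $c_x$; repeating verbatim for the $y$-component with $U_y$ and invoking~\eqref{yua} gives asymptotic velocity $\frac{n_l}{n}U_c$ for the full planar positions, and the limiting offset vector $(c_x, c_y)$ is the fixed shape of the formation. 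Because this limiting velocity is a positive scalar multiple of $U_c$, all agents asymptotically align in the direction of $U_c$, which is the remaining assertion of the lemma.

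I do not expect a genuine obstacle here; the only points requiring a little care are (i) justifying the termwise integration of the matrix exponential series — which is immediate since it is a finite sum of $n$ terms — and (ii) noting that $v_k v_k^* B U_x$ is a fixed vector so that the scalar integrals $\int_0^t e^{\lambda_k\nu}d\nu$ factor out cleanly, and that dividing by $\lambda_k$ is valid only for $k\geq 1$, which is exactly why the $k=0$ term had to be split off into $x^{(a)}$ in the first place. A minor presentational choice is whether to state the convergence of the formation offset as part of the proof or merely the weaker ``move as a time-independent linear formation'' phrasing of the lemma; I would include the explicit limit $c_x$ since it costs nothing and makes the ``time-independent formation'' claim precise. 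One should also remark that the result is genuinely asymptotic within each piecewise-constant interval of $B$ and $U_x$, consistent with the setup preceding~\eqref{eq-pieceDyn}, though for the statement of Lemma~\ref{L-ua} one implicitly takes $B$ and $U_c$ eventually constant.
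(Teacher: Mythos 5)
Your proposal is correct and follows essentially the same route as the paper: the paper derives the lemma from the agreement-space term $x^{(a)}(t)=\frac{n_l}{n}U_x\,t\,\mathbf{1}_n$ in~(\ref{xua})--(\ref{yua}), and establishes the convergence of $x^{(h)}(t)$ and of the deviation $x^{(b)}(t)$ in Sections~\ref{H_cyclic} and~\ref{Dev}, which is exactly the three-part decomposition you assemble. If anything your write-up is the more complete proof of the ``alignment'' claim, since it makes explicit that the other two pieces converge; note also that your limit $-\sum_{k=1}^{n-1}\lambda_k^{-1}v_kv_k^*BU_x$ carries the correct sign from $\int_0^t e^{\lambda_k\nu}\,d\nu=(e^{\lambda_k t}-1)/\lambda_k$, whereas the paper's~(\ref{xub})--(\ref{Delta}) has a sign slip that is immaterial to the statement.
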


\subsubsection{Deviation from the agreement space}\label{Dev}
Consider now the remainder $x^{(b)}(t)$ of the input-related part, i.e. the part of $x^{(u)}(t)$ containing all eigenvalues of $M$ other than the zero eigenvalue and representing the agents' state deviation from the agreement subspace.

We have

\begin{eqnarray}
  x^{(b)}(t)& = &
 \left [ \sum_{i=1}^{n-1} \int_0^t \left ( e^{\lambda_i  \nu} \right ) v_i v_i^* d\nu \right ] B U_x\\
 & = & \left[ \sum_{i=1}^{n-1} \frac{1}{\lambda^i}(1-e^{\lambda_i t})v_i v^*_i \right ] B U_x \label{xub}
\end{eqnarray}
Since all eigenvalues $\lambda_i \text{ for } i = 1, \dots, n-1$ have strictly negative real parts, $x^{(b)}(t)$ converges  asymptotically to a time independent vector, denoted by $\xi_x$, given by:
\begin{equation}\label{Delta}
  \xi_x \triangleq x^{(b)}(t\rightarrow \infty)= \left[ \sum_{i=1}^{n-1} \frac{1}{\lambda_i}v_i v^*_i \right ] B U_x
\end{equation}
Similarly, for the $y$ axis,
\begin{equation}\label{Delta-y}
  \xi_y \triangleq y^{(b)}(t\rightarrow \infty)= \left[ \sum_{i=1}^{n-1} \frac{1}{\lambda_i}v_i v^*_i \right ] B U_y
\end{equation}
and all the following properties of $\xi_x$ hold also for $\xi_y$.
Note that $M$ depends only on the number of agents, thus for a given number of agents and a constant broadcast control, $U_c$, the deviations depend only on $B$, i.e. on the agents detecting the broadcast control. Moreover, if all the agents receive the broadcast control then there are no deviations, i.e. the agents converge and move as a single point.
 \begin{lem}\label{L-dev}
    A group of $n$ agents performing linear cyclic pursuit, \emph{with all agents receiving an exogenous velocity control} $U_c$,  will asymptotically  move as a single point with velocity $U_c$.
\end{lem}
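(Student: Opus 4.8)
The plan is to specialize the decomposition $X(t) = x^{(h)}(t) + x^{(a)}(t) + x^{(b)}(t)$ already established to the case $B = \mathbf{1}_n$, and show that the only surviving time-dependent term is the agreement-space drift $x^{(a)}(t)$, which for $n_l = n$ reduces to $U_x\, t\, \mathbf{1}_n$. First I would note that the homogeneous part $x^{(h)}(t)$ converges to the centroid $\frac1n\sum_i x_i(0)\,\mathbf 1_n$ regardless of $B$, as shown in Subsection \ref{H_cyclic}. Next, by Lemma \ref{L-ua} with $n_l = n$, the agreement-space contribution is $x^{(a)}(t) = U_x\, t\, \mathbf 1_n$. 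The crux is therefore to show that the deviation term $\xi_x$ of \eqref{Delta} vanishes when $B = \mathbf 1_n$.

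For that step I would use the spectral identity $\sum_{k=0}^{n-1} v_k v_k^* = I_n$ (completeness of the DFT eigenbasis), which gives $\sum_{k=1}^{n-1} v_k v_k^* = I_n - v_0 v_0^*$. Hence
\begin{equation*}
\xi_x = \left[\sum_{i=1}^{n-1}\frac{1}{\lambda_i} v_i v_i^*\right] B\, U_x .
\end{equation*}
When $B = \mathbf 1_n = \sqrt{n}\, v_0$, each term $v_i^* \mathbf 1_n$ for $i \neq 0$ equals $\sqrt{n}\, v_i^* v_0 = 0$ by orthonormality of the eigenvectors, so $\xi_x = \mathbf 0_n$; the same holds for $\xi_y$. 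Intuitively, $\mathbf 1_n$ lies entirely in the kernel of $M$ (the agreement direction), so the broadcast input, being applied identically to every coordinate, never excites any of the decaying modes.

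Putting the three pieces together, for $B = \mathbf 1_n$ we get $X(t) \xrightarrow{t\to\infty} \frac1n\big(\sum_i x_i(0)\big)\mathbf 1_n + U_x\, t\, \mathbf 1_n$, and likewise $Y(t) \xrightarrow{t\to\infty} \frac1n\big(\sum_i y_i(0)\big)\mathbf 1_n + U_y\, t\, \mathbf 1_n$. Thus all agents asymptotically occupy the common point $c(t) = c(0) + U_c\, t$, i.e. they move as a single point with velocity $U_c$. The main obstacle — really the only non-routine point — is justifying the vanishing of $\xi_x$, which reduces cleanly to the orthogonality $v_i^* v_0 = 0$ for $i \neq 0$; everything else is an immediate consequence of results already proved in the excerpt.
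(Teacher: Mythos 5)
Your proof is correct and follows the paper's argument essentially verbatim: the same decomposition into $x^{(h)}$, $x^{(a)}$, $x^{(b)}$, the same appeal to Lemma \ref{L-ua} for the common velocity, and the same reduction to showing $\xi_x=\mathbf{0}_n$. The only (minor) difference is in that last step: you use orthonormality of the DFT eigenbasis, writing $\mathbf{1}_n=\sqrt{n}\,v_0$ and $v_i^*v_0=0$ for $i\neq 0$, whereas the paper substitutes $v_i^*=\frac{1}{\lambda_i}v_i^*M$ and uses $M\mathbf{1}_n=\mathbf{0}_n$; both are valid one-line verifications of the same fact, and your version has the small advantage of not requiring $\mathbf{1}_n$ to be a right eigenvector of $M$, only that it be spanned by $v_0$.
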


\begin{proof}
If all the agents receive the broadcast control then $n_l=n$ and $B=\mathbf{1}_n$. According to Lemma \ref{L-ua}, all the agents will move with a velocity $U_c$. It remains to show that there is no deviation.  Recalling that $v_i^*$ is a left eigenvector of $M$ with eigenvalue $\lambda_i$ and $B=\mathbf{1}_n$ we can rewrite eq. (\ref{Delta}) as
\begin{equation}\label{Delta-2}
  \xi_x = \left[ \sum_{i=1}^{n-1} \frac{1}{\lambda_i}v_i \frac{1}{\lambda_i} v^*_i M \right ] \mathbf{1}_n U_x = \mathbf{0}_n
\end{equation}
and similarly $\displaystyle \xi_y = \mathbf{0}_n$
\end{proof}

\subsection{Asymptotic trajectories of $n$ agents with an exogenous velocity control detected by $n_l$ agents}
This section summarizes the results derived in sections \ref{H_cyclic} - \ref{Dev}.

The  asymptotic position of agent $i$, chasing agent $i+1$, in the two-dimensional space, when an external control  $U_c=(U_x \quad U_y)^T$  is detected by $n_l$ agents,  will be

\begin{equation}\label{pAsymp}
  p_i(t \rightarrow \infty)=  \left [
 \alpha +\beta  U_c  t + \gamma_i  U_c \right ]
  \end{equation}

 where
 \begin{itemize}
   \item $\alpha=(\alpha_x\text{   }\alpha_y)^T=\frac{1}{n}\sum_{i=1}^{n}p_i(0)$ is the agreement, or gathering, point when there is no external input
   \item $\displaystyle \beta = \frac{n_l}{n}$ and $\beta  U_c$ is the collective velocity.
   \item $\alpha + \beta U_c t$ is the position of the moving agreement point at time $t$
   \item  $\gamma_i U_c $ is the deviation of agent $i$ from the moving agreement point, where
   \begin{equation*}
  \gamma= \left[ \sum_{i=1}^{n-1} \frac{1}{\lambda_i}v_i v^*_i \right ] B
\end{equation*}
and $\gamma_i$ is the $i'th$ element of $\gamma$
 \end{itemize}
 
 \subsection{Illustration of linear cyclic pursuit - single interval}
 We illustrate the derived analytical results by 2 simulated examples. Both examples assume six agents, starting from the same random positions, shown in Fig. \ref{fig:InitTopo6}, but differing in the broadcast control and the set of agents detecting it. Example1 and Example2 were run for 50 secs (500000 points, dt=0.0001). This simulation time was long enough to obtain the analytically derived asymptotic behaviour.
 
\begin{figure}[H]

\centering
\includegraphics[scale=0.5]{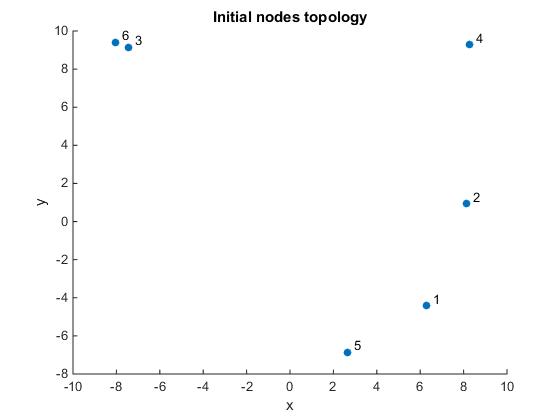}
\caption{Initial positions}
\label{fig:InitTopo6}

\end{figure}

\begin{enumerate}
\item \textbf{Example1}
\begin{itemize}

\item broadcast control, $U_c=(5,1)$ (Slope=0.2)
\item set of ad-hoc leaders $\{0,1,0,0,0,0 \}$, thus $n_l=1$ (out of 6)
\end{itemize}
\textbf{Simulated results for Example1:}

\begin{figure}[H] 
\centering
\includegraphics[scale=0.5]{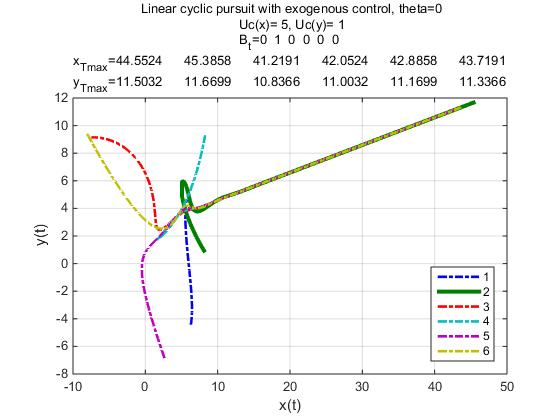}
\caption{ Emergent trajectories }\label{fig:Lin6Traj}
\end{figure}
A solid line represents the leader while the trajectories of followers are shown by dotted lines.

\begin{figure}[H]
\centering
\includegraphics[scale=0.5]{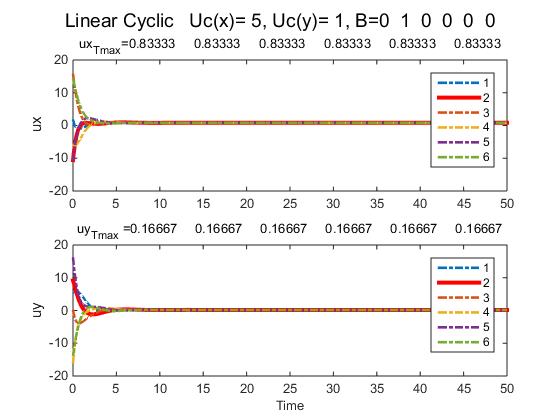}
\caption{ Agents velocities in a long time interval}
\label{fig:Lin6Vel}

\end{figure}

In this figure a solid red line represents a leader while  dotted lines represent followers.
The velocities converge to  $ux_{Tmax}, ux_{Tmax}$ which agree with the derived asymptotic velocities of $\displaystyle \frac{n_l}{n} U_c$, where $n_l=1, n=6$.

\begin{figure}[H]
\centering
\includegraphics[scale=0.5]{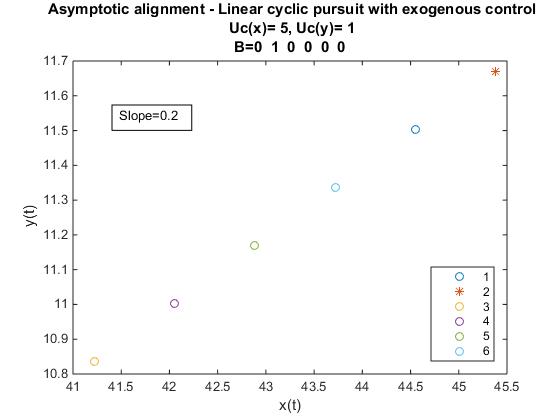}
\caption{ "Asymptotic" agents alignment}
\label{fig:Lin6Align}
\end{figure}

This figure shows the position of the agents at the end of the time interval. A star represents a leader while o represents followers. The slope of the line equals the slope of $U_c$, i.e. the agents align in the direction of $U_c$.
\item \textbf{Example2}
\begin{itemize}

\item broadcast control, $U_c=(6,3)$ (Slope=0.5)
\item set of ad-hoc leaders $\{1,1,0,1,1,1 \}$, thus $n_l=5$ (out of 6)
\end{itemize}

\textbf{Simulated results for Example2:}

\begin{figure}[H]
\centering
\includegraphics[scale=0.5]{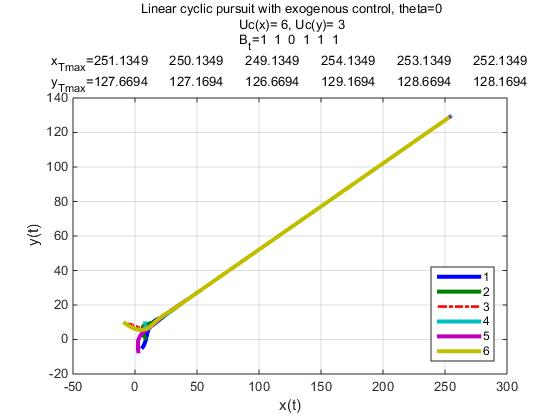}
\caption{ Emergent trajectories }\label{fig:Ex2-Lin6Traj}
\end{figure}

\begin{figure}[H]
\centering
\includegraphics[scale=0.5]{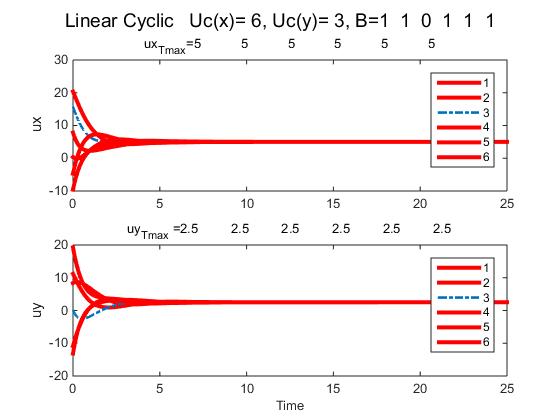}
\caption{ Agents velocities converge to asymptotic values} 
\label{fig:Ex2-Lin6Vel}

\end{figure}
Displayed $ux_{Tmax}, ux_{Tmax}$ agree with the derived asymptotic velocities of $\displaystyle \frac{n_l}{n} U_c$, where $n_l=5, n=6$

\begin{figure}[H]
\centering
\includegraphics[scale=0.5]{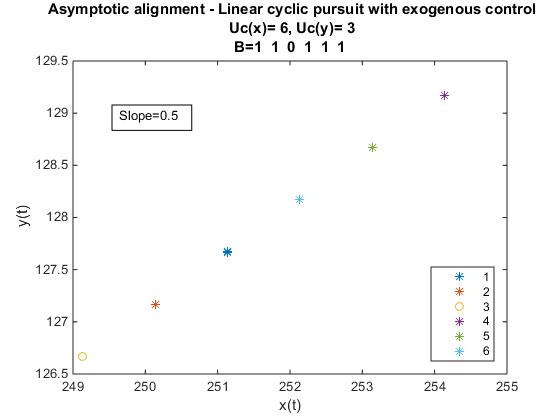}
\caption{ Agents asymptotically align along the direction of $U_c$}
\label{fig:Ex2-Lin6Align}
\end{figure}

\end{enumerate}

We observe that in both examples the agents \textbf{asymptotically} align in the direction of $U_c$ and move as a linear formation with velocity $\displaystyle \frac{n_l}{n}U_c$, as expected.

We emphasize that this behaviour is indeed asymptotic, not obtained in a short time interval, as shown in figures \ref{fig:Ex2-short-Lin6Vel} and \ref{fig:Ex2-short-Lin6Align}, next.

\begin{figure}[H]
\centering
\includegraphics[scale=0.5]{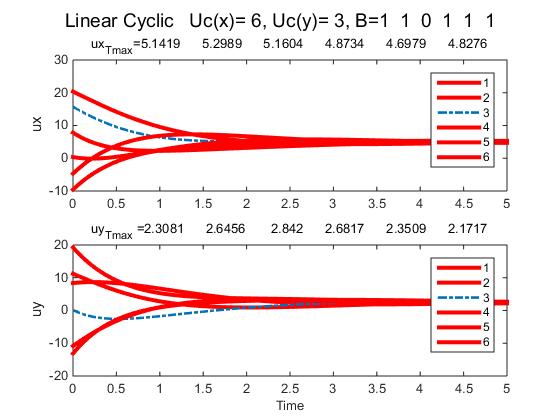}
\caption{Agents velocities in a short time interval (5sec)}
\label{fig:Ex2-short-Lin6Vel}

\end{figure}

\begin{figure}[H]
\centering
\includegraphics[scale=0.5]{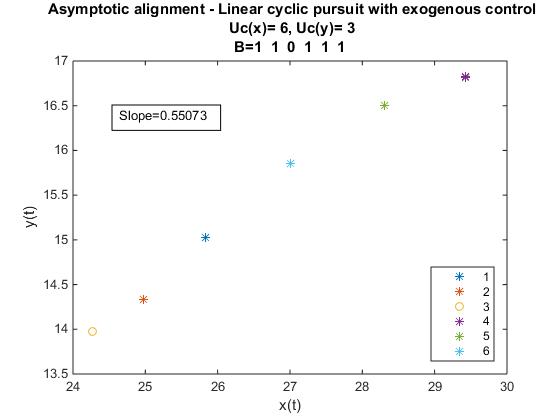}
\caption{ Agents positions ("alignment") at the end of a short interval (5sec) }
\label{fig:Ex2-short-Lin6Align}
\end{figure}

\subsection{Illustration of linear cyclic pursuit - multiple intervals}
We recall that we assumed $B(t)$ and $U_c(t)$ to be  piecewise constant,  i.e. $B(t) \overset{\Delta}{=} B(t_k)$, and $U_c(t) \overset{\Delta}{=} U_c(t_k)$, where $t_k$ is the time of change of the set of ad-hoc leaders or of the broadcast control, and we treated separately each time interval,
 $[t_k,t_{k+1} )$. In the above sections we considered a single time interval, where $U_c$ and $B$ are constant.  

 In this section we show, by simulation, the emergent behaviour over multiple time intervals. We consider as before 6 agents, starting at initial positions as illustrated in Fig. \ref{fig:InitTopo6} but, starting with $U_c(0)=(6,3)^T$, we allow for discrete changes in $U_c$ every 10 secs.  $U_c(t)$ is shown in Fig. \ref{fig:MultipleUc}.  In this example the set of ad-hoc leaders is randomly selected at $t=0$ and remains constant afterwards, i.e. $B(t) = B(0) \forall t$.

 The emergent trajectories are illustrated in  Fig. \ref{fig:TrajMultipleUc}.
\begin{figure}[H]
\centering
\includegraphics[scale=0.5]{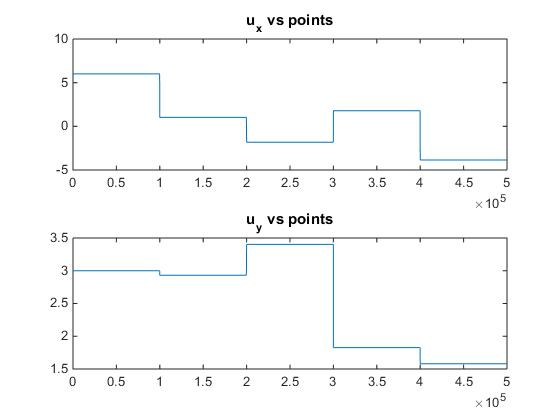}
\caption{$U_c(t)$ profile }
\label{fig:MultipleUc}
\end{figure}

\begin{figure}[H]
\centering
\includegraphics[scale=0.5]{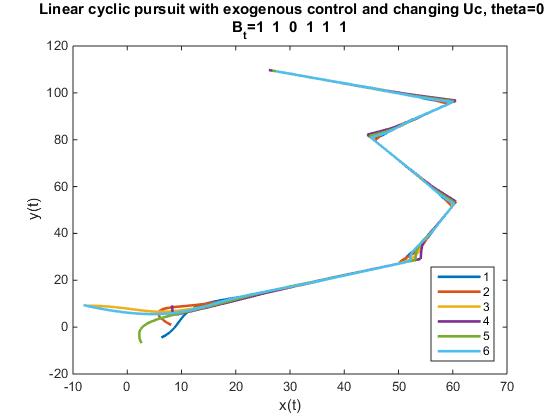}
\caption{Emerging trajectories with $U_c(t)$ as in Fig. \ref{fig:MultipleUc} }
\label{fig:TrajMultipleUc}
\end{figure}

Fig. \ref{fig:VelMultipleUc} shows the agents velocities vs. analytically computed asymptotic velocity of the linear formation emerging from a broadcast velocity signal as shown in  Fig. \ref{fig:MultipleUc}.

\begin{figure}[H]
\centering
\includegraphics[scale=0.5]{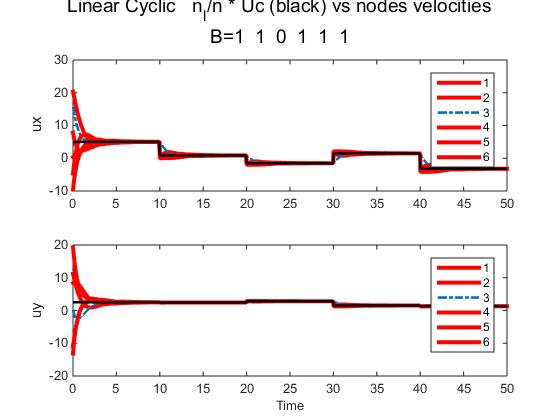}
\caption{Emerging velocities vs computed formation velocity }
\label{fig:VelMultipleUc}
\end{figure}

We note that changes in the random set of ad-hoc leaders, within an interval where $U_c$ is constant, will affect the speed of the agents (if the \emph{number} of ad-hoc leaders changes) and also the arrangement of the agents within the linear formation.

\newpage
\section{Non-linear cyclic pursuit }\label{NonLinCyclic}
We assume the model for non-linear cyclic pursuit to be the "bugs" model, where agent $i$ chases agent $i+1$  (agent $n$ chases agent $1$) with constant, common, speed along the line of sight,  merges with it upon capture and the two agents continue with velocity $\dot{p}_{i+1}$. Upon capture, the number of agents is reduced.  Agent $i$ is said to capture agent $i+1$ if the distance between them, $d_i$, is zero
\begin{equation}\label{eq-d_i}
d_i(t)=\|p_{i+1}(t) - p_i(t)\|
\end{equation} 
where $p_i(t)$ is the position of agent at time $t$ and $\|.\|$ represents the Euclidean norm. 

In case of an external broadcast velocity control, upon capture, the merged agent will be  assumed to detect the broadcast velocity signal, if either one or both of the merged agents detected the broadcast velocity signal.

We first analyze the properties of the "bugs" model, without external broadcast velocity signal,    
 and then derive the impact of the broadcast velocity which is  detected by a random set of agents.
\subsection{"Bugs" model}\label{Bugs} 
Let $d_i(t)$ be defined by eq. (\ref{eq-d_i}) and assume, without loss of generality, the speed of all agents to be 1. Then, the pursuit is formally defined as follows:

\begin{eqnarray}  
 \text{ if    } d_i(t) \neq 0  \quad \dot{p}_i(t) & = & \frac{p_{i+1}(t)-p_i(t)}{d_i(t)}\label{pi-dot}\\
 \text{ if    } d_i(\hat{t}) = 0  \quad p_i(t) & = & p_{i+1}(t) \text{  for all  } t \geq \hat{t}\label{d_i-0}
 \end{eqnarray} 
  
 \begin{lem}\label{BugsProperties}
 If $d_i(t) \neq 0$ for all $i$ and the motion of each agent is represented by (\ref{pi-dot}), then 
\begin{enumerate}[label=(\alph*)]
\item $d_i(t)$ is monotonically non-increasing for all $i$.
\item There exists a finite time $T_m$ such that $d_i(T_m)=0$ for all $i$  
\end{enumerate}
 \end{lem}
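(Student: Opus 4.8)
The plan is to prove (a) by a one-line differentiation of $d_i$ and (b) by an ``energy''-type argument on the perimeter $L(t)=\sum_{i=1}^n d_i(t)$ of the pursuit polygon, the geometric input being that the total turning of a closed polygon is at least $2\pi$. For part (a), write $r_i=p_{i+1}-p_i$, so $d_i=\|r_i\|$ and (using $d_i\neq0$ and (\ref{pi-dot})) $e_i:=r_i/d_i=\dot p_i$ is the unit heading of agent $i$; then
\begin{equation*}
\dot d_i=\frac{r_i^\top\dot r_i}{\|r_i\|}=e_i^\top(\dot p_{i+1}-\dot p_i)=e_i^\top e_{i+1}-\|e_i\|^2=\cos\theta_i-1,
\end{equation*}
where $\theta_i\in[0,\pi]$ is the angle between the headings $e_i,e_{i+1}$, i.e. the exterior (turning) angle of the polygon $p_1p_2\cdots p_n$ at vertex $p_{i+1}$. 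Since $\cos\theta_i\le1$ this gives $\dot d_i\le0$, which is (a).

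For part (b), summing the identity above yields $\dot L(t)=-\sum_{i=1}^n\bigl(1-\cos\theta_i(t)\bigr)\le0$. To get a \emph{uniform} decay rate I would use the geometric fact that for any closed polygon with $p_{i+1}\neq p_i$ for all $i$ --- precisely the standing hypothesis --- the turning angles satisfy $\sum_{i=1}^n\theta_i\ge2\pi$. Granting this, the map $(\theta_1,\dots,\theta_n)\mapsto\sum_i(1-\cos\theta_i)$ is continuous and strictly positive on the compact set $\{\theta\in[0,\pi]^n:\sum_i\theta_i\ge2\pi\}$ (it vanishes only at $\theta=0$, which is excluded), hence has a positive minimum $c_n>0$; so $\dot L\le-c_n$, $L(t)\le L(0)-c_n t$, and $L$ must reach $0$ at a finite time $T_m\le L(0)/c_n$. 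Since every $d_i\ge0$ and $\sum_i d_i(T_m)=L(T_m)=0$, this forces $d_i(T_m)=0$ for all $i$ --- the simultaneity of the capture being an automatic by-product of working with the sum. (One can make $c_n$ explicit using $1-\cos\theta\ge\frac{2}{\pi^2}\theta^2$ on $[0,\pi]$ together with $\sum_i\theta_i^2\ge\frac1n(\sum_i\theta_i)^2$, giving $c_n=8/n$ and $T_m\le\frac{n}{8}L(0)$.)

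The turning-angle bound $\sum_i\theta_i\ge2\pi$ is where the real work lies, since the pursuit polygon is generally non-convex and one cannot simply quote ``the exterior angles of a convex polygon sum to $2\pi$''. I would prove it by lifting the headings $e_1,\dots,e_n$ to real numbers $\phi_1,\dots,\phi_{n+1}$ with $|\phi_{i+1}-\phi_i|=\theta_i$; closure of the polygon forces $\phi_{n+1}-\phi_1=2\pi m$ for an integer $m$, so $2\pi|m|\le\sum_i\theta_i$, and if $\sum_i\theta_i<2\pi$ then $m=0$ and the closed sequence $(\phi_i)$ has oscillation $\le\frac12\sum_i\theta_i<\pi$ (the total variation of a loop is at least twice its oscillation); then all $e_i$ lie within an arc shorter than a semicircle, hence in an open half-plane, which is impossible because the telescoping identity $\sum_i d_i e_i=\sum_i(p_{i+1}-p_i)=0$ with all $d_i>0$ prevents the $e_i$ from lying in any open half-plane. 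The main obstacle is thus this non-convex turning estimate, including the bookkeeping of the boundary case $\theta_i=\pi$ in the lift; a secondary subtlety is that $\dot L\le-c_n$ holds only while \emph{all} $d_i>0$, so the conclusion should be read as: under (\ref{pi-dot}) the first instant at which any distance vanishes is finite, and at that instant all of them vanish. Alternatively, one may simply cite the finite-termination results of \cite{Bruck1993} and \cite{Richardson} and retain from the above only the perimeter monotonicity.
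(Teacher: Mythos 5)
Your proof is correct, and part (a) is exactly the paper's computation ($\dot d_i=\cos\theta_i-1\le 0$ via differentiating $d_i^2=\langle p_{i+1}-p_i,p_{i+1}-p_i\rangle$ and using unit speeds). For part (b), however, you take a genuinely different route from the paper, which simply invokes Richardson's Lemma 1.1 and reproduces his argument in Appendix \ref{proof_Tm}: there, the closure identity $\sum_i d_i\dot p_i=\sum_i(p_{i+1}-p_i)=0$ is used to find a single index $j$ with $\dot p_1^T\dot p_j<0$, hence $\|\dot p_j-\dot p_1\|>1$, whence the triangle inequality gives $\sum_i\|\dot p_{i+1}-\dot p_i\|>1$ and Cauchy--Schwarz converts $\sum_i\dot d_i=-\tfrac12\sum_i\|\dot p_{i+1}-\dot p_i\|^2$ into the uniform bound $\sum_i\dot d_i<-\tfrac1{2n}$, so $T_m\le t_0+2n\sum_i d_i(t_0)$. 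You instead prove the non-convex turning estimate $\sum_i\theta_i\ge 2\pi$ (your lifting argument is sound: closure of the angular lift plus the fact that the total variation of a loop dominates twice its oscillation shows that $\sum_i\theta_i<2\pi$ would confine all headings to an open half-plane, contradicting the same closure identity $\sum_i d_ie_i=0$), and then bound $\sum_i(1-\cos\theta_i)\ge\frac{2}{\pi^2 n}\bigl(\sum_i\theta_i\bigr)^2\ge 8/n$, which is valid since $1-\cos\theta\ge\frac{2}{\pi^2}\theta^2$ on $[0,\pi]$. What your approach buys is a sharper, fully self-contained decay rate ($T_m\le\frac n8\sum_i d_i(t_0)$, a factor $16$ better than Richardson's constant) at the cost of the more delicate turning-angle lemma; what the paper's approach buys is brevity, since it only needs the crude bound $\sum_i\|\dot p_{i+1}-\dot p_i\|>1$, which drops out of the obtuse-pair observation in two lines. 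Both arguments ultimately rest on the same telescoping identity, and you correctly flag the one genuine subtlety (also handled by the paper only later, in Theorem \ref{sum_di_dot-1}(b)): the uniform decay holds only while all $d_i>0$, so the clean statement is that the first vanishing time is finite, with non-mutual captures handled by restarting the argument with fewer agents.
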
 
 \begin{proof}
 In the sequel, for simplicity of notation, we omit explicit reference to time $t$, whenever it is not confusing.
 \begin{enumerate}[label=(\alph*)]
 \item :  
 $d_i(t)$ is non-increasing iff  $\dot{d}_i(t) \leq 0$ for all $t$.

 From (\ref{eq-d_i}) we have 
\begin{eqnarray}
d_i^2 & = & \langle p_{i+1}-p_i, p_{i+1}-p_i\rangle\label{d_i-sqr}\\
\dot{d}_i & = & \frac{1}{d_i} \langle \dot{p}_{i+1}-\dot{p}_i ,p_{i+1}-p_i\rangle\label{gen-dot-d_i}\\
         & = & \langle \dot{p}_{i+1}-\dot{p}_i ,\dot{p}_i\rangle\\
         & = & \cos(\theta) -1 \leq 0 \quad \text{  for any   } \theta 
\end{eqnarray}
where we used $\|\dot{p}_i\| =\|\dot{p}_{i+1}\|= 1$ and $\theta $ is defined as in Fig. \ref{fig-Theta}.
\begin{figure}[H]
\begin{center}
\includegraphics[scale=0.5]{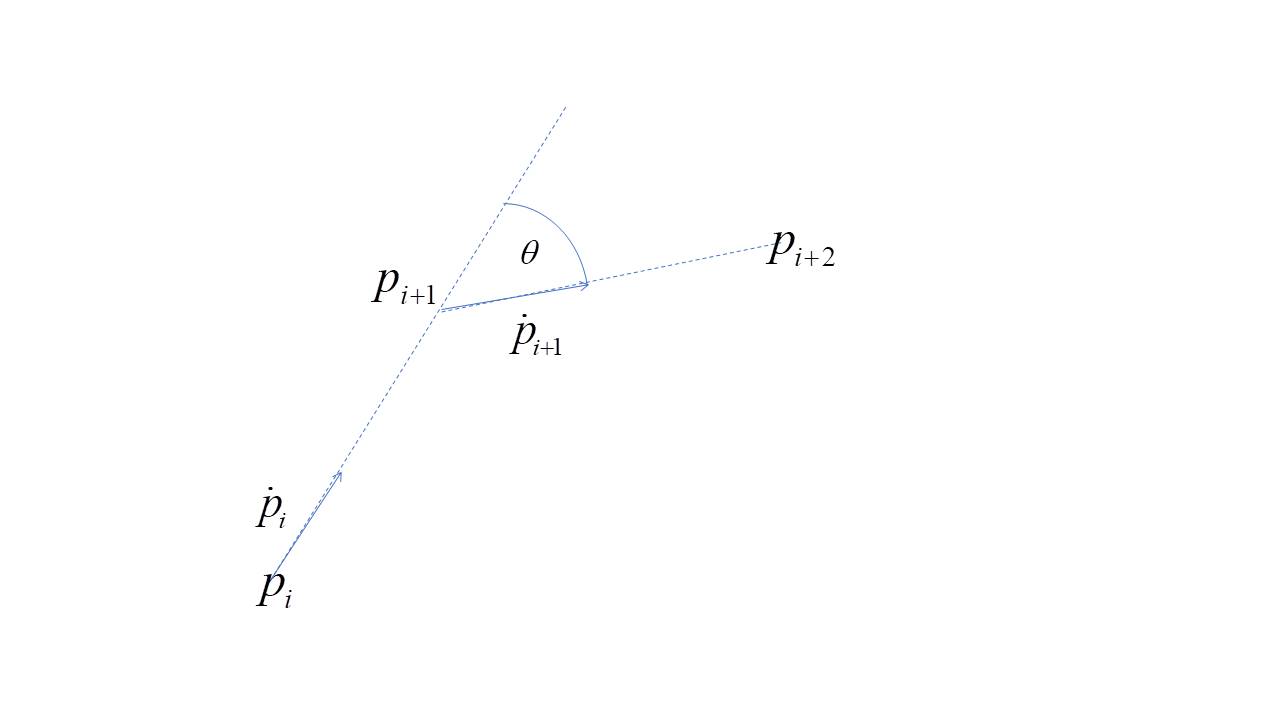}
\caption{$\theta$ is the angle between vectors $\dot{p}_i$ and $\dot{p}_{i+1}$ }\label{fig-Theta}
\end{center}
\end{figure}

 \item :
Richardson shows in \cite{Richardson}, Lemma 1.1, that $T_m$ exists and satisfies $T_m \leq t_0 +2 n \sum_{i=1}^n d_i(t_0)$. The proof is repeated in Appendix \ref{proof_Tm}, for completeness.   
 \end{enumerate}  
\end{proof}

\subsection{"Bugs" model with external input}\label{BugsInp}  
Let $K_i$ denote the set of $k$ indices of 
$K_i=\{i-k, i-k+1, \dots, i-1\}$.  If $d_j = 0 \quad \forall j \in K_i$ and   $d_i \neq 0$ we say that the $k$ agents in $K_i$ collapsed into agent $i$.

In order to analyze the motion of agent $i$ performing bearing-only cyclic pursuit when a  velocity signal, $U_c$, is broadcast by an external controller and detected by a random set of agents, we have to consider  three different scenarios :
  \begin{enumerate}
  \item  $i$ is a free agent, i.e. $d_i \neq 0$ and $K_i =\{ \}$
  \item $k$ agents labeled $(i-k, i-k+1, \dots, i-1)$ have merged into $i$, i.e.  $d_i \neq 0 $ and  $ d_j =0 \text{  }\forall j \in K_i$  
  \item Agent $i$ collapsed into agent $j$, i.e. $d_j \neq 0, \quad d_i=0$ and $i \in K_j$  
  \end{enumerate}

  These cases are formalized by eqs. (\ref{pi-dot-Uc}) - (\ref{pi-follower}):
 \begin{equation}\label{pi-dot-Uc}
 \dot{p}_i = 
 \begin{cases}
    \frac{p_{i+1}-p_i}{d_i}+b_i U_c \quad \text{ if    } d_i \neq 0 \text{   and  } K_i=\{ \}\\
    \frac{p_{i+1}-p_i}{d_i}+(\bigvee_{j \in K_i} b_j) U_c \quad \text{ if    } d_i \neq 0 \quad \forall j \in K_i\\
 \end{cases}
 \end{equation}  
\begin{equation}
  p_i =  p_j \quad \forall i \in K_j \label{pi-follower}
\end{equation}  
 
where $\bigvee$ stands for logical "or",  
\begin{equation}\label{bi}
  b_i = \begin{cases}1 ; \text{    if    } i \in N^l \\
  0 ; \text{   otherwise}
  \end{cases}
\end{equation}
and $N^l$ is the set of agents that detected the broadcast control.

If $d_i =0 \quad \forall i$ then    
$$  \dot{p}_i =   (\bigvee_{ \forall i}  b_i) U_c  \quad \forall i$$
Thus, once gathered, the agents will move as a single agent with velocity $U_c$, if at least one of the agents detected the external control $U_c$.

In section \ref{SingleDist}, we show the impact of the exogenous velocity signal, $U_c$ on a single distance,  $d_i(t)$, defined by (\ref{eq-d_i}), when either $i$ or $i+1$, or both, detect $U_c$.  We derive the behaviour of $d_i$, in each case, as a function of $\|U_c\|$ and the \emph{instantaneous}   geometry.    Since the agents are mobile, the geometry is time dependent. Therefore, we cannot deduce from $d_i(t)$ instantaneously decreasing that it will decrease for all $t$ or that it will reach zero.    In section \ref{UcBounds} we derive an upper bound on  $\|U_c\|$ that ensures convergence to a point in finite time, $T_m$, i.e. $\sum_{i=1}^n d_i(T_m) =0$, independently of the instantaneous geometries.

\subsubsection{Single $d_i$ behaviour }\label{SingleDist}
Let $d_i > 0 \quad \forall i$. This is a valid assumption since subsequent to a collision (capture) the system evolves as a cyclic pursuit with fewer agents. Thus, 
\begin{eqnarray}
\dot{d}_i & = & \frac{1}{d_i} \langle \dot{p}_{i+1}-\dot{p}_i ,p_{i+1}-p_i\rangle\label{gen-dot-d_i -2}\\
\dot{p}_i & =& \frac{p_{i+1}-p_i}{d_i} +b_i U_c
\end{eqnarray}
 The impact of $U_c$ on any distance $d_i > 0$ can be separated into four cases:
\begin{enumerate}
  \item Neither $i$ nor $i+1$ detected the signal $U_c$  $\Rightarrow b_i = 0, \quad b_{i+1}=0$
  \item Both $i$ and $i+1$ detected the signal $U_c$  $\Rightarrow b_i = 1, \quad b_{i+1}=1$
  \item $i$ detected the signal $U_c$, but $i+1$ did not  $\Rightarrow b_i = 1, \quad b_{i+1}=0$
  \item $i+1$ detected the signal $U_c$, but $i$ did not  $\Rightarrow b_i = 0, \quad b_{i+1}=1$
  \end{enumerate}
  \paragraph{Case 1: $b_i=0, \quad b_{i+1}=0$}.\\
  
This case is identical to Lemma \ref{BugsProperties}(a), Fig. \ref{fig-Theta}, and therefore  $\dot{d}_i = \cos(\theta) -1 \leq 0$, independently of the broadcast $U_c$.
  \paragraph{Case 2: $b_i=1,\quad b_{i+1}=1$}.\\

In this case $\displaystyle  \frac{p_{i+1}-p_i}{d_i}  =  \dot{p}_i -U_c$ and (\ref{gen-dot-d_i -2}) can be rewritten as
\begin{eqnarray*}
\dot{d}_i & = & \langle \dot{p}_{i+1}-\dot{p}_i ,\dot{p}_i-U_c\rangle\\
          & = & \langle (\dot{p}_{i+1}-U_c)-(\dot{p}_i-U_c) ,\dot{p}_i-U_c\rangle 
\end{eqnarray*}
Recalling that $\displaystyle (\dot{p}_j-U_c)= \frac{p_{j+1}-p_j}{d_j}; \quad j=i, i+1$ we have 
$\|\dot{p}_j-U_c\|=1; \quad j=i, i+1$  and thus 
\begin{equation}
\dot{d}_i = \cos(\theta) -1
\end{equation}
where $\theta$ is now the angle between $ \dot{p}_i-U_c $ and $ \dot{p}_{i+1}-U_c $, see Fig. \ref{fig-Uc-both}.
Thus, in this case $\dot{d}_i \leq 0$ , for any $ U_c $, same as case 1. 
\begin{figure}[H]
\begin{center}
\includegraphics[scale=0.8]{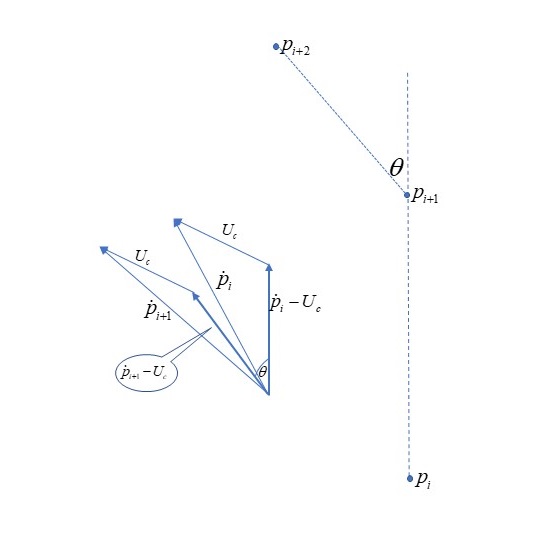}
\caption{$U_c$ detected by both $i$ and $i+1$ }\label{fig-Uc-both}
\end{center}
\end{figure}

 \paragraph{Case 3: $b_i=1,\quad b_{i+1}=0$}.\\
 
In this case $\dot{p}_{i+1}$ is not affected by $U_c$, as shown in Fig. \ref{fig-Uc-i}, where we have  
 $\|\dot{p}_i-U_c\|=1$, and  $\|\dot{p}_{i+1}\|=1$.

\begin{figure}[H]
\begin{center}
\includegraphics[scale=0.8]{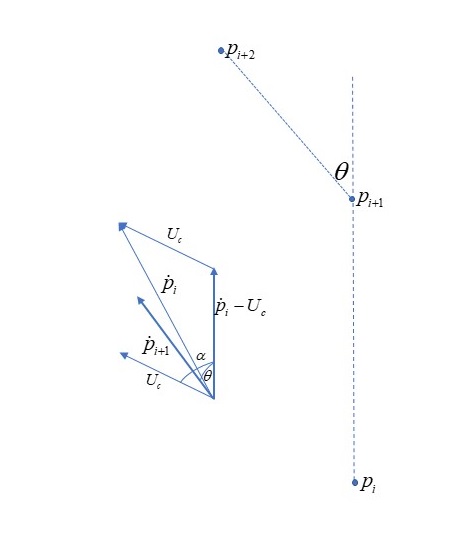}
\caption{$U_c$ detected by $i$ but not by $i+1$.}\label{fig-Uc-i}
\end{center}
\end{figure}

In this case
\begin{eqnarray*}
\dot{d}_i & =  & \langle \dot{p}_{i+1}-\dot{p}_i ,\dot{p}_i-U_c\rangle\\
         & = & \langle (\dot{p}_{i+1}-U_c)-(\dot{p}_i-U_c) ,\dot{p}_i-U_c\rangle\\
         & = &  \langle (\dot{p}_{i+1}-U_c) ,(\dot{p}_i-U_c)\rangle -1\\
         & = &  \langle (\dot{p}_{i+1}) ,(\dot{p}_i-U_c)\rangle -  \langle U_c ,(\dot{p}_i-U_c)\rangle-1\\
         & = & \cos(\theta) - \|U_c\| \cos(\alpha) -1
\end{eqnarray*}
where the angles $\theta, \alpha$ are defined as in Fig. \ref{fig-Uc-i}. In order to have $\dot{d}_i \leq 0$ we must have $ - \|U_c\| \cos(\alpha) \leq 1- \cos(\theta)$. Therefore, $d_i$ will be non-increasing
\begin{itemize}
\item for any $\|U_c\|$  if $ \cos(\alpha) \geq 0 $, i.e. $\alpha$ is an acute angle
\item for  $\displaystyle \|U_c\| < \frac{1-\cos(\theta)}{-\cos(\alpha)} $  if $ \cos(\alpha) < 0 $, i.e. $\alpha$ is an obtuse angle
\end{itemize}

 \paragraph{Case 4: $b_i=0,\quad b_{i+1}=1$}.\\
 
In this case $\dot{p}_{i}$ is not affected by $U_c$, as shown in Fig. \ref{fig-Uc-i2}, where we have 
  $\|\dot{p}_i\|=1$, and  $\|\dot{p}_{i+1} - U_c\|=1$.

\begin{figure}[H]
\begin{center}
\includegraphics[scale=0.8]{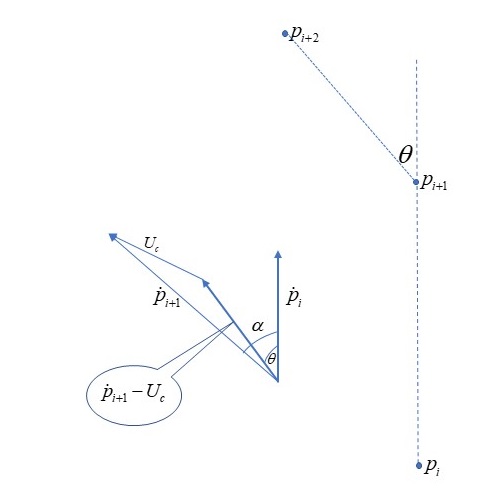}
\caption{$U_c$ detected by $i+1$ but not by $i$.}\label{fig-Uc-i2}
\end{center}
\end{figure} 

\begin{eqnarray*} 
\dot{d}_i & =  & \langle \dot{p}_{i+1}-\dot{p}_i ,\dot{p}_i\rangle\\
         & = &  \langle \dot{p}_{i+1} ,\dot{p}_i\rangle -1\\
         & = &  \langle (\dot{p}_{i+1}-U_c) ,\dot{p}_i\rangle +  \langle U_c ,\dot{p}_i\rangle-1\\
         & = & \cos(\theta) + \|U_c\| \cos(\alpha) -1
\end{eqnarray*}
Therefore, in this case,  $d_i$ will be non-increasing
\begin{itemize}
\item for any $\|U_c\|$  if $ \cos(\alpha) < 0 $, i.e. $\alpha$ is an obtuse angle
\item for  $\displaystyle \|U_c\| < \frac{1-\cos(\theta)}{\cos(\alpha)} $  if $ \cos(\alpha) >  0 $, i.e. $\alpha$ is an acute angle
\end{itemize}

\subsubsection{Gathering in finite time }\label{UcBounds}
In section \ref{SingleDist} we derived the \emph{instantaneous} behaviour of a single inter-agents distance when none or both or one of the limiting agents detected the broadcast control. Since the geometry of the agents is time dependent we cannot deduce the emergent behaviour of the system from the instantaneous behaviour. Given that the agents behave according to the "bugs" model with broadcast control, see (\ref{pi-dot-Uc}), we want to show that there exist conditions  that ensure the gathering of the agents to a moving point in finite time. These conditions are derived in Theorem \ref{sum_di_dot-1} as an upper bound on the magnitude of 
$U_c$. 
\begin{thm}\label{sum_di_dot-1}
If $d_i$ is defined by (\ref{eq-d_i}) and  $d_i(t) \neq 0; i=1,...,n $,   then 
\begin{enumerate}[label=(\alph*)]
\item if 
${\displaystyle \|U_c\| \leq \frac{1}{2n^2}}$, there exists a finite time $T_m$ such that 
${\sum_{i=1}^n d_i(t \geq T_m)=0}$
\item (a) holds for all times when $\sum d_i(t) \neq 0$ even if $d_i(t) = 0$ for some $i$, i.e. the capture is non-mutual
\item $\displaystyle T_m \leq t_0 +\frac{2 n \sum d_i(t_0)}{1- 2 n^2 \|U_c\|}$, where $t_0$ is the initial time.
\end{enumerate}
\end{thm}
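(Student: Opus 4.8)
The plan is to reduce the whole statement to the autonomous ``bugs'' estimate of Lemma~\ref{BugsProperties}(b) (Appendix~\ref{proof_Tm}) by tracking a single scalar, the total perimeter $D(t):=\sum_{i=1}^{n}d_i(t)$. The first step is to merge the four cases of Section~\ref{SingleDist} into one instantaneous inequality. Writing $e_i:=(p_{i+1}-p_i)/d_i$ for the unit edge vector, each of Cases~1 and~2 gives $\dot d_i=\cos\theta_i-1$, while Cases~3 and~4 give $\dot d_i=\cos\theta_i-1\mp\|U_c\|\cos\alpha_i$, and the crucial observation is that in \emph{every} one of these cases $\theta_i$ is the angle between the \emph{same} pair of true edge directions $e_i$ and $e_{i+1}$ (the translated vectors $\dot p_j-U_c$ that appear in the computations are exactly the $e_j$). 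Since $|\cos\alpha_i|\le 1$, this yields, for every $i$ and regardless of which agents are leaders,
\begin{equation*}
\dot d_i\ \le\ (\cos\theta_i-1)+\|U_c\| ,
\end{equation*}
and hence, summing over $i$,
\begin{equation*}
\dot D(t)\ \le\ \sum_{i=1}^{n}\bigl(\cos\theta_i(t)-1\bigr)+n\|U_c\| .
\end{equation*}

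Next I would invoke the autonomous analysis. The proof of Lemma~\ref{BugsProperties}(b) reproduced in Appendix~\ref{proof_Tm} shows that, as long as $D>0$, the autonomous perimeter contracts at a rate bounded away from $0$, i.e.\ $\sum_i(\cos\theta_i-1)\le-\tfrac{1}{2n}$ (this is exactly the inequality whose integration produces $T_m\le t_0+2n\sum d_i(t_0)$). Substituting,
\begin{equation*}
\dot D(t)\ \le\ -\frac{1}{2n}+n\|U_c\|\ =\ -\,\frac{1-2n^{2}\|U_c\|}{2n}\qquad\text{whenever }D(t)>0 .
\end{equation*}
If $\|U_c\|<\tfrac{1}{2n^{2}}$ the right-hand side is a negative constant, so $D$ decreases with a uniform negative slope; since $D\ge 0$ this forces $D(t)=0$ for all $t\ge t_0+\dfrac{2n\,D(t_0)}{1-2n^{2}\|U_c\|}$, giving statement~(a) together with the explicit bound~(c). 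The endpoint $\|U_c\|=\tfrac{1}{2n^{2}}$ is then immediate once one notes that the true autonomous contraction rate strictly exceeds $\tfrac{1}{2n}$ on non-degenerate configurations, the appendix bound being deliberately crude.

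For part~(b) I would observe that nothing above used that all $d_i$ remain positive. At a capture time two consecutive agents merge (the one that has caught its prey is absorbed into it); $D$ is continuous across such an event, because the vanishing distance is simply deleted while the neighbouring distance is unchanged, and the process continues as a bugs system with $n'<n$ agents in which a merged agent is a leader iff one of its constituents was. All the inequalities above hold verbatim with $n$ replaced by $n'$, and since $n'\mapsto\tfrac{1}{2n'}$ increases while $n'\|U_c\|$ decreases, the decay rate only improves; hence $\dot D\le-\tfrac{1-2n^{2}\|U_c\|}{2n}$ persists on every subinterval where $D>0$, and $D$ still reaches $0$ within the same time bound, whether or not the intermediate captures are mutual.

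I expect the only genuinely delicate point to be the first step: verifying that in all four leader/follower configurations the angle entering $\dot d_i$ is the common angle between the physical edge vectors $e_i,e_{i+1}$, so that the $\cos\theta_i-1$ terms reassemble \emph{exactly} into the autonomous rate $\dot D$. This is precisely what the case decomposition of Section~\ref{SingleDist} was arranged to deliver; once it is in hand, everything else is the elementary Richardson-type integration carrying an extra additive $n\|U_c\|$, and matching the stated threshold $\tfrac{1}{2n^{2}}$ and the bound~(c) is just bookkeeping.
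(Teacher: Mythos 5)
Your proof is correct and follows essentially the same route as the paper's: the paper obtains the single inequality $\dot d_i = -\tfrac12\|u_{i+1}-u_i\|^2+(b_{i+1}-b_i)\,u_i^TU_c \le (u_i^Tu_{i+1}-1)+\|U_c\|$ directly by algebra rather than by reassembling the four cases of Section \ref{SingleDist}, then sums, applies Cauchy--Schwarz and Richardson's bound $\sum_i\|u_{i+1}-u_i\|>1$ to reach $\sum_i\dot d_i\le -\tfrac{1}{2n}+n\|U_c\|$, and treats captures and the final integration exactly as you do. (Note that both you and the paper really only establish the strict case $\|U_c\|<\tfrac{1}{2n^2}$; your hand-waved endpoint remark is no less rigorous than the paper's own proof, which silently works with strict inequality throughout.)
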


\begin{proof}
(a):

Following the methodology in \cite{Richardson}, we prove that for $\displaystyle \|U_c\| < \frac{1}{2n^2} $ we have 
 $ \sum_{i=1}^n\dot{d}_i \leq -c $, where  $\displaystyle c=\frac{1}{2n}-n \|U_c\|>0$, therefore there exists a time $T_m$ such that 
 $\sum_{i=1}^n d_i(T_m)=0$.

For $d_i(t) \neq 0 $ we have
\begin{eqnarray}
  \dot{p}_i &=& u_i+b_i U_c \label{dot_p_i_Uc}\\
  u_i &=& \frac{p_{i+1}-p_i}{d_i} \label{u_i}
\end{eqnarray}
Using $d_i^2 = \|p_{i+1}-p_i \|^2$, (\ref{u_i}) and  (\ref{dot_p_i_Uc}),  we have
\begin{eqnarray*}
   \dot{d}_i &=& u_i^T (\dot{p}_{i+1}-\dot{p}_i ) \\
        & = &u_i^T ( u_{i+1} + b_{i+1} U_c - u_i - b_i U_c)\\
        & =& u_i^T  u_{i+1} - 1 + (b_{i+1}-b_i) u_i^T U_c
\end{eqnarray*}
where we used $\|u_i\|=1$.
But 
\begin{eqnarray*}
u_i^T  u_{i+1} - 1 & = &-\frac{1}{2}( u_{i+1}-u_i)^T ( u_{i+1}-u_i) \\
         & = &  -\frac{1}{2} \| u_{i+1}-u_i \|^2
\end{eqnarray*}
Thus
\begin{equation}\label{dot_di_Uc}
 \dot{d}_i = -\frac{1}{2} \| u_{i+1}-u_i \|^2 + (b_{i+1}-b_i) u_i^T U_c
\end{equation}
Let $V_i^c =(b_{i+1}-b_i)  U_c$
\begin{equation}\label{dot_di_Uc-2}
  \dot{d}_i \leq -\frac{1}{2} \| u_{i+1}-u_i \|^2 + \|V_i^c\| = -\frac{1}{2} \| u_{i+1}-u_i \|^2 +|b_{i+1}-b_i|\|U_c\|
\end{equation}

\begin{eqnarray}
  \sum_{i=1}^n \dot{d}_i  &\leq & -\frac{1}{2} \sum_{i=1}^n \| u_{i+1}-u_i \|^2 + \|U_c\| \sum_{i=1}^n |b_{i+1}-b_i| \\
    & \leq & -\frac{1}{2n} (\sum_{i=1}^n \| u_{i+1}-u_i \|)^2 +n_b \|U_c\| \label{sum_dot_di_Uc}
\end{eqnarray}
where $n_b$ is the number of pairs of agents $\{i,i+1\}$ such that $ |b_{i+1}-b_i| =1$ and we used the Cauchy-Schwartz inequality, (\ref{C-S}) in Appendix \ref{proof_Tm}. Since $n_b \leq n$ we  can write
\begin{equation}\label{sum_dot_di_Uc-2}
  \sum_{i=1}^n \dot{d}_i  \leq  -\frac{1}{2n} (\sum_{i=1}^n \| u_{i+1}-u_i \|)^2 +n \|U_c\|
\end{equation}
Repeating the reasoning in \cite{Richardson} and Appendix \ref{proof_Tm} we can show that 
$\sum_{i=1}^n \| u_{i+1}-u_i \| >1$ and therefore 
\begin{equation*}
  \sum_{i=1}^n \dot{d}_i  \leq  -\frac{1}{2n}  +n \|U_c\|
\end{equation*}

Thus, if  $\displaystyle \|U_c\| < \frac{1}{2 n^2} $ then  $ \sum_{i=1}^n \dot{d}_i < 0$

(b): After any capture, say $i$ captures $i+1$, the two agents  merge and thus $n$ is reduced. Let $\hat{n}$ be the number of remaining agents at time $t$ and $\hat{d}_i; i=1,...., \hat{n}$ the new distances. Then, by the same reasoning as for the proof of Lemma \ref{sum_di_dot-1} we have
\begin{equation}\label{sum_dot_di_hat}
  \sum_{i=1}^{\hat{n}} \dot{\hat{d}}_i  \leq  -\frac{1}{2 \hat{n}}  + \hat{n} \|U_c\|
\end{equation}
But, by definition,  $\sum_{i=1}^n d_i = \sum_{i=1}^{\hat{n}} \hat{d}_i$ and $\hat{n} \leq n  $ , thus
 \begin{eqnarray}
   \sum_{i=1}^n \dot{d}_i & = & \sum_{i=1}^{\hat{n}} \dot{\hat{d}}_i \\
    & \leq & -\frac{1}{2 \hat{n}}  + \hat{n} \|U_c\| \\
    & \leq & -\frac{1}{2 n}  + n \|U_c\|\label{sum_dot_di_n}
 \end{eqnarray}

(c): The distances $d_i(t)$ are continuous and there can be only a finite number of captures, thus by integrating (\ref{sum_dot_di_n}) we obtain
\begin{equation*}
  \sum d_i(T_m) = 0 \leq \sum d_i(t_0) + (-\frac{1}{2 n} +n \|U_c\|)(T_m-t_0)
\end{equation*}
\begin{equation}\label{TerminationTime}
  T_m \leq t_0 +\frac{2 n \sum d_i(t_0)}{1- 2 n^2 \|U_c\|}
\end{equation}
\end{proof}

\begin{remark}\label{stringentUc}
The condition $\|U_c\|<\frac{1}{2 n^2}$ is a very stringent bound for gathering and moving as a single point, since in general $n_b < n$ and we do not require mutual capture, i.e $n$ can decrease with time. 

\end{remark}

\begin{remark}
The bound (\ref{TerminationTime}) for gathering time  holds only for  $\|U_c\|<\frac{1}{2 n^2}$.

\end{remark}

 \section{Illustration by simulation of emergent behaviour in bearing-only cyclic pursuit}\label{SimComp}
 We simulated the cyclic pursuit with broadcast control to test the theory developed above.
 \subsection{Simulation parameters}
 \begin{itemize}
 \item User defined number of agents, $n$, and external velocity control, $U_c$. Since in our "bugs" model the speed of agents is one we limited  $U_c$, to  $\|U_c\| \leq 1$. 
 \item Randomly selected initial positions and agents detecting the external control
 \item Bearing-only simulation description
  \begin{itemize}
     \item $i$ captures $i+1$ (and merges with it) at time $t_c$, if $d_i(t_c) \leq \epsilon$  or   $i$ overtakes $i+1$. In the presented simulations we used $\epsilon=0.001$
      \item  For $t \geq t_c$ 
     
     \begin{itemize}
      \item $p_i(t) = p_{i+1}(t )$
       \item $\hat{b}_{i+1} = \hat{b}_i \bigvee b_{i+1}$
       \item $\displaystyle \dot{p}_{i+1}(t ) = \frac{p_{i+2}(t)-p_{i+1}(t)}{d_{i+1}}+\hat{b}_{i+1} U_c$
     \end{itemize}
       
  \end{itemize}
 \end{itemize}
 
 \subsection{Examples of simulations results}
 In this section we show sample simulation results for various 
   initial topologies,  various external inputs and sets of ad-hoc leaders. All the presented simulation results used $n=6$.
 
\subsubsection{Example1-bugs: Gathering property of the homogeneous "bugs" model}
We show an example of gathering  of the "bugs" model, without external input
 Let Example1-bugs denote the case of initial topology as in Fig. \ref{fig-Ex1Topo} and $U_c=(0,0)$.
\begin{figure}[H]
\begin{center}
\includegraphics[scale=0.6]{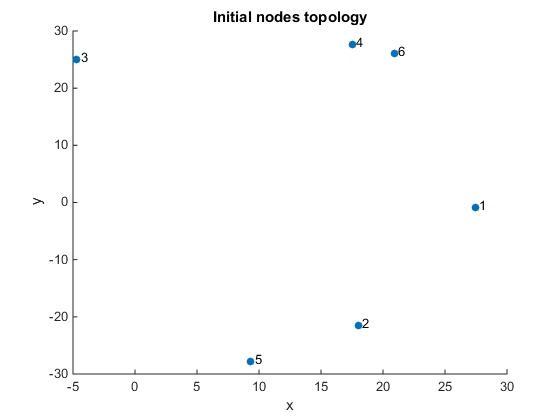}
\caption{Initial agents topology for Example1-bugs }\label{fig-Ex1Topo}
\end{center}
\end{figure}

\begin{figure}[H]
\begin{center}
\includegraphics[scale=0.6]{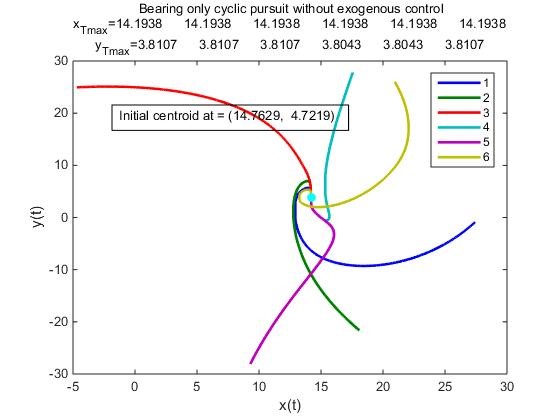}
\caption{Example1-bugs: Agents gathering in case of bearing-only cyclic pursuit without external input }\label{fig-Ex1-BearTraj-u0}
\end{center}
\end{figure}

In Fig. \ref{fig-Ex1-BearTraj-u0},   $(x_{Tmax},y_{Tmax})$ denote the position of each agent, at the end of the time interval. We see that the agents indeed converged to a point, but this differs from the (displayed) initial centroid.

For comparison, we show the behaviour of the agents, starting from the same initial conditions, but performing linear cyclic pursuit. The gathering point in this case is the initial centroid.

\begin{figure}[H]
\begin{center}
\includegraphics[scale=0.6]{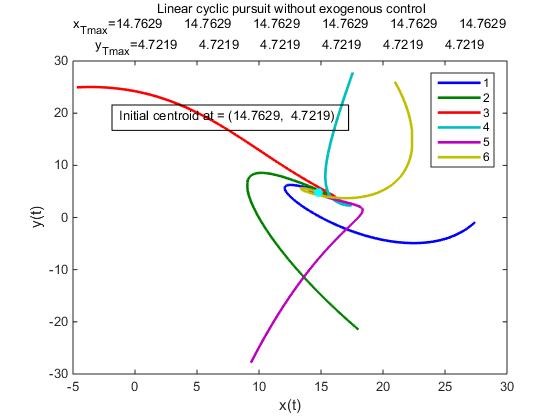}
\caption{ Agents gathering in case of linear cyclic pursuit without external input }\label{fig-Ex1-LinTraj-u0}
\end{center}
\end{figure}

\subsubsection{Example2-bugs: Impact of external input and incomplete sets of ad-hoc leaders} 
Next we show the impact of an external input, $U_c$, on the emergent behaviour, for various values of $U_c$, such that $\|U_c\| \leq 1$, and various incomplete sets of agents detecting it. We show for each example the behaviour of $d_i(t)$, the distance  of  agent $i$ to $i+1$, $\forall i$, as well as agents' trajectories and  velocities.
 We observe that in all considered cases
\begin{itemize}
\item There exists a time $t_c$ of mutual capture where $d_i=0 \quad \forall i$
\item For $t \geq t_c$ all agents move as a single point with velocity $U_c$
\item The value of $t_c$ and the behaviour of distances to prey, $d_i(t)$ for $t < t_c$, as well as of agents' velocities for $t < t_c$ depend on the value of the external input and on the set of agents detecting the broadcast signal
\end{itemize} 

All cases of Example2-bugs were run starting from the positions shown in Fig. \ref{fig-Ex2Topo}.  
\begin{figure}[H]
\begin{center}
\includegraphics[scale=0.6]{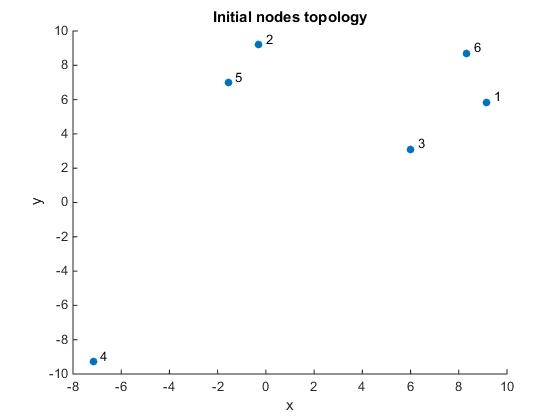}
\caption{Initial agents positions for Example2-bugs }\label{fig-Ex2Topo}
\end{center}
\end{figure}

Cases considered were
\begin{itemize}
\item \textbf{Example2.1-bugs}: Same broadcast signal, different ad-hoc leaders
\begin{itemize}

\item Example2.1.1-bugs: $U_c=(0.5,0.3)^T$, $B=(1 1 1 0 1 1 )^T $
\item Example2.1.2-bugs: $U_c=(0.5,0.3)^T$, $B=(0 0 1 0 0 0)^T$
\end{itemize}
\item  \textbf{Example2.2-bugs}: Same set of ad-hoc leaders, increasing magnitude of broadcast signal
\begin{itemize}

\item Example2.2.1-bugs: $B=(0 1 0 0 1 0 )^T $,  $U_c=(0.013, 0)^T$
\item Example2.2.2-bugs: $B=(0 1 0 0 1 0 )^T $,  $U_c=(0.5,0.3)^T$
\item Example2.2.3-bugs: $B=(0 1 0 0 1 0 )^T $,  $U_c=(-1,0)^T$

\end{itemize}

\end{itemize}
where $B$ is a vector of pointers to the agents detecting the exogenous control.

\paragraph{Example2.1-bugs: Same broadcast signal, different ad-hoc leaders\\}

\underline{\textbf{Chaser to prey distances - $d_i(t)$}}

\begin{figure}[H]
\begin{center}
\includegraphics[scale=0.5]{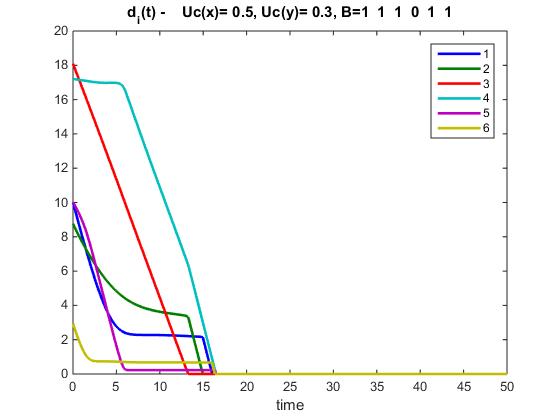}
\caption{ Example2.1.1-bugs: Distances behaviour }\label{fig-Di-u0503B111011}
\end{center}
\end{figure}

\begin{figure}[H]
\begin{center}
\includegraphics[scale=0.5]{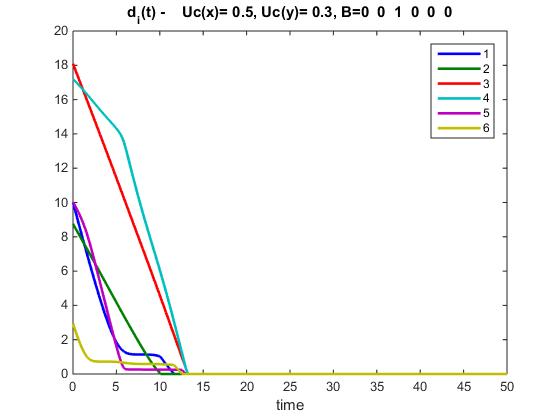}
\caption{Example2.1.2-bugs: Distances behaviour  }\label{fig-Di-u0503B3}
\end{center}
\end{figure}

\newpage
\underline{\textbf{Velocities of agents}}

\begin{figure}[H]
\begin{center}
\includegraphics[scale=0.5]{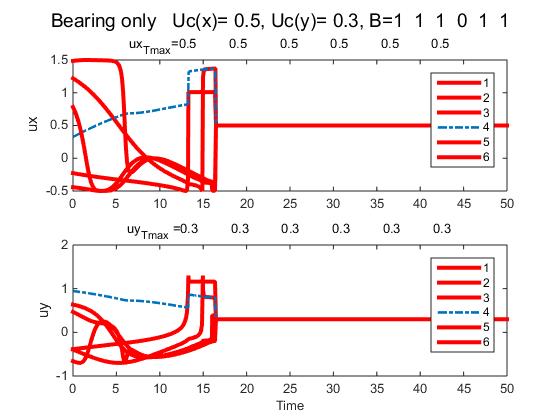}
\caption{ Example2.1.1-bugs: Agents' velocities  }\label{fig-Vel-u0503B111011}
\end{center}
\end{figure}

\begin{figure}[H]
\begin{center}
\includegraphics[scale=0.5]{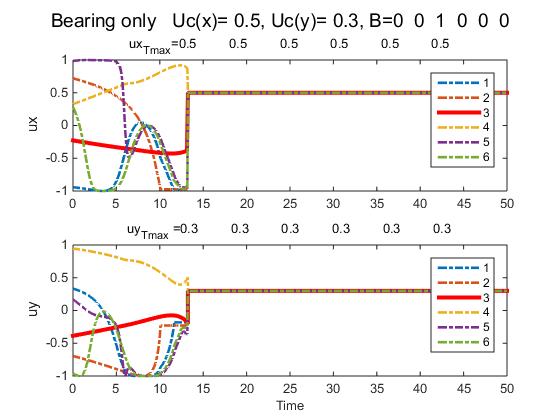}
\caption{ Example2.1.2-bugs: Agents' velocities }\label{fig-Vel-u0503B3}
\end{center}
\end{figure}

\newpage
\underline{\textbf{ Agents' Trajectories }}

\begin{figure}[H]
\begin{center}
\includegraphics[scale=0.5]{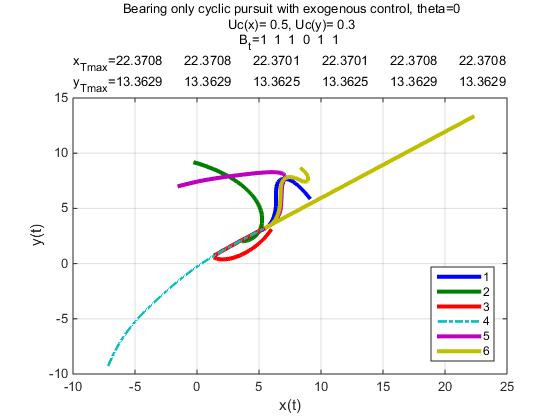}
\caption{ Example2.1.1-bugs: Agents' trajectories  }\label{fig-Traj-u0503B111011}
\end{center}
\end{figure}

\begin{figure}[H]
\begin{center}
\includegraphics[scale=0.5]{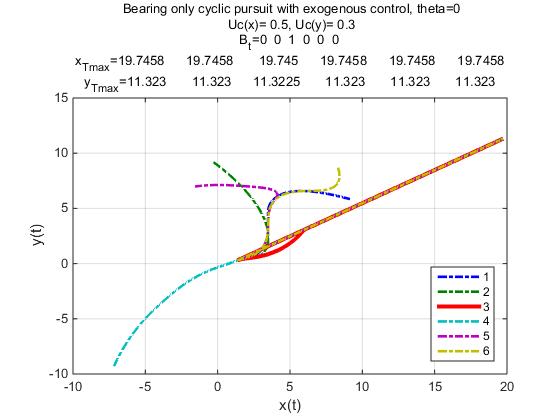}
\caption{Example2.1.2-bugs: Agents' trajectories  }\label{fig-Traj-u0503B3}
\end{center}
\end{figure}

\newpage
\paragraph{Example2.2-bugs: Same set of ad-hoc leaders, increasing magnitude of broadcast signals\\}

\underline{\textbf{Chaser to prey distances -$d_i(t)$}}

\begin{figure}[H]
\begin{center}
\includegraphics[scale=0.5]{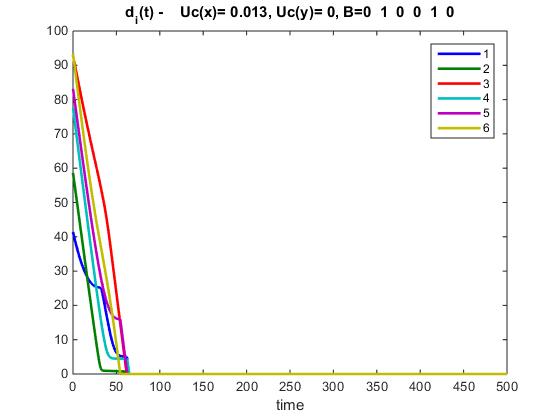}
\caption{ Example2.2.1-bugs: Distances behaviour  }\label{fig-Di-u00130B010010}
\end{center}
\end{figure}

\begin{figure}[H]
\begin{center}
\includegraphics[scale=0.5]{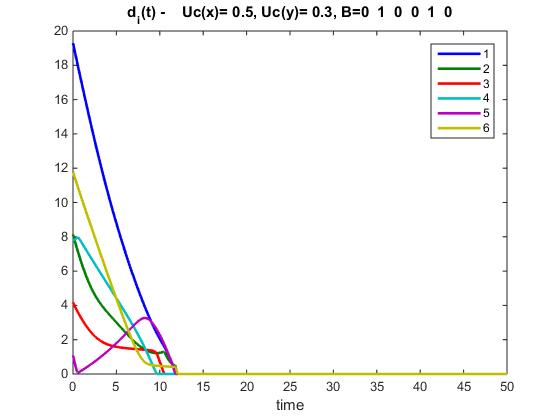}
\caption{ Example2.2.2-bugs: Distances behaviour  }\label{fig-Di-u0503B010010}
\end{center}
\end{figure}

\begin{figure}[H]
\begin{center}
\includegraphics[scale=0.5]{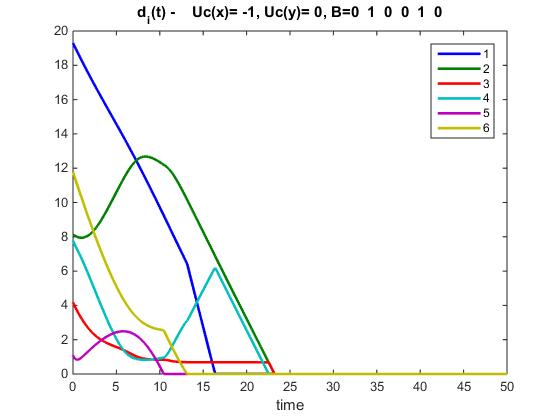}
\caption{Example2.2.3-bugs: Distances behaviour  }\label{fig-Di-u-10B010010}
\end{center}
\end{figure}

\newpage
\underline{\textbf{Agents' velocities}}

\begin{figure}[H]
\begin{center}
\includegraphics[scale=0.5]{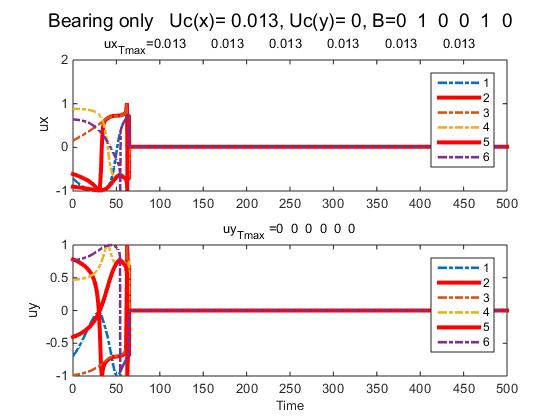}
\caption{Example2.2.1-bugs: Velocities behaviour }\label{fig-Vel-u00130B010010}
\end{center}
\end{figure}

\begin{figure}[H]
\begin{center}
\includegraphics[scale=0.5]{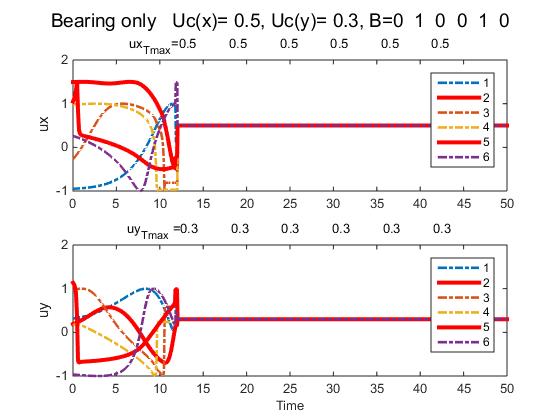}
\caption{Example2.2.2-bugs: Velocities behaviour  }\label{fig-Vel-u0503B010010}
\end{center}
\end{figure}

\begin{figure}[H]
\begin{center}
\includegraphics[scale=0.5]{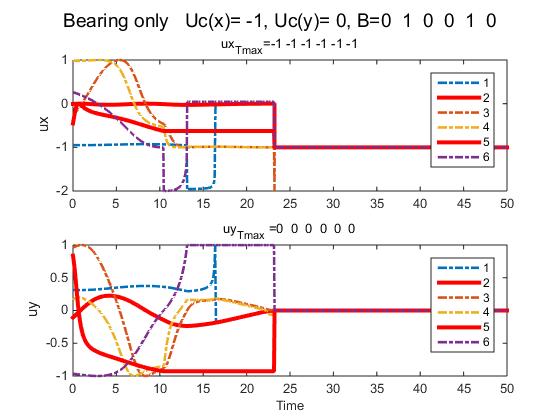}
\caption{ Example2.2.3-bugs: Velocities behaviour  }\label{fig-Vel-u-10B010010}
\end{center}
\end{figure}

\newpage
\underline{\textbf{Agents' trajectories}}

\begin{figure}[H]
\begin{center}
\includegraphics[scale=0.5]{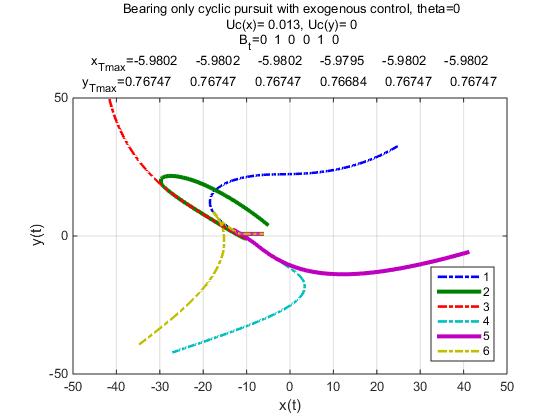}
\caption{Example2.2.1-bugs:  Trajectories behaviour  }\label{fig-Traj-u00130B010010}
\end{center}
\end{figure}

\begin{figure}[H]
\begin{center}
\includegraphics[scale=0.5]{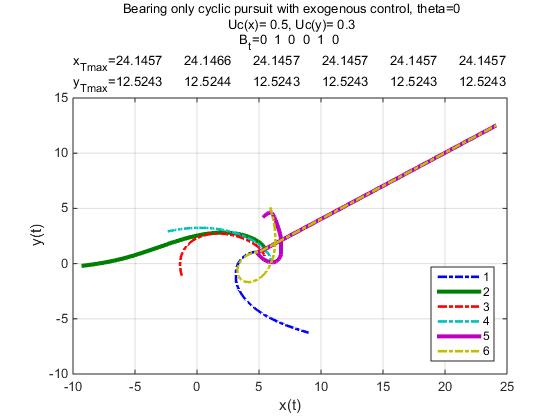}
\caption{Example2.2.2-bugs: Trajectories behaviour   }\label{fig-Traj-u0503B010010}
\end{center}
\end{figure}

\begin{figure}[H]
\begin{center}
\includegraphics[scale=0.5]{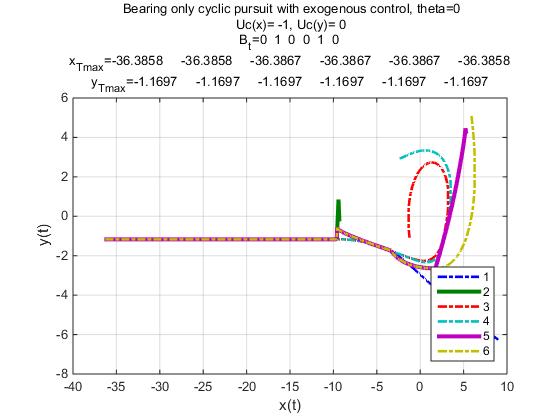}
\caption{Example2.2.3-bugs: Trajectories behaviour   }\label{fig-Traj-u-10B010010}
\end{center}
\end{figure}

\newpage
\subsubsection{Example3-bugs: Broadcast signal received by all}
In this section we show that if the broadcast control is received by all then the gathering property of the cyclic pursuit is  
 independent of the value of $ \|Uc\|$. We show simulation results for 
 \begin{itemize}
 \item Example3.1-bugs:$U_c=(5,3)^T$
 \item Example3.2-bugs:$U_c=(-3,2)^T$
 \item Example3.3-bugs: $U_c=(0,0)^T$
 \end{itemize}
From the presented results we observe that the time to convergence is identical in all cases, i.e. identical to the case $U_c=(0,0)^T$. 
These results correspond to the theory in section \ref{SingleDist}, case 2, where we show that if $b_i=b_{i+1} =1$ then $\dot{d}_i$ is independent of $U_c$.

\underline{\textbf{Distances behaviour -$d_i(t)$}}
\begin{figure}[H]
\begin{center}
\includegraphics[scale=0.5]{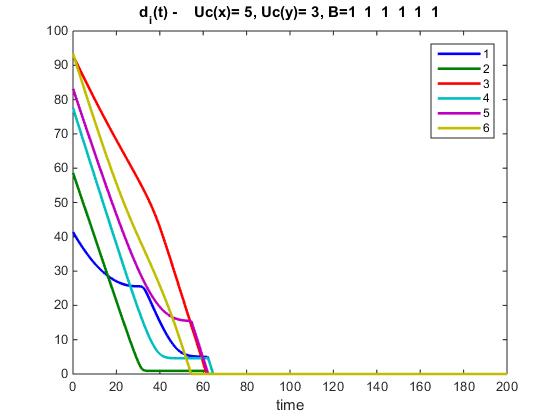}
\caption{Example3.1-bugs: Distances behaviour  }\label{fig-Ex3-Ball-U53-Di}
\end{center}
\end{figure}

\begin{figure}[H]
\begin{center}
\includegraphics[scale=0.5]{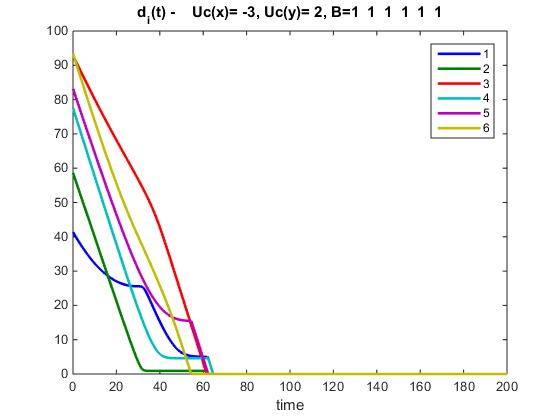}
\caption{Example3.2-bugs: Distances behaviour  }\label{fig-Ex3-Ball-U-32-Di}
\end{center}
\end{figure} 

\begin{figure}[H]
\begin{center}
\includegraphics[scale=0.5]{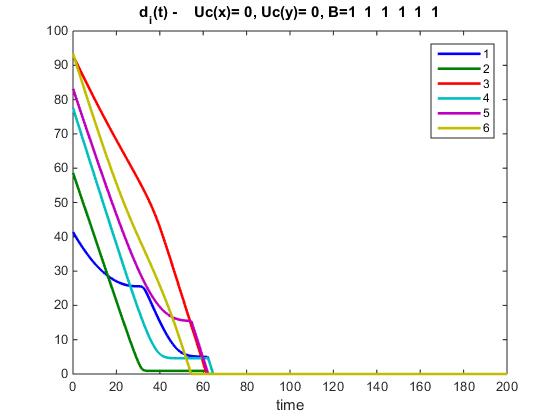}
\caption{Example3.3-bugs: Distances behaviour in "bugs" cyclic pursuit, without external control}\label{fig-Ex3-Ball-U00-Di}
\end{center}
\end{figure}

 In all the examples shown above, we obtained
 \begin{itemize}
 \item Identical distances behaviour, i.e. independent of $U_c$
 \item The time to convergence was 64.5979
 \end{itemize}

.

\newpage 

\underline{\textbf{Agents' velocities}}

\begin{figure}[H]
\begin{center}
\includegraphics[scale=0.5]{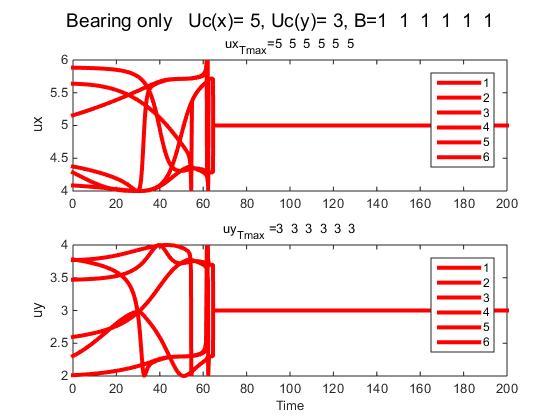}
\caption{Example3.1-bugs:  Velocities behaviour}\label{fig-Ex3-Ball-U53-Vel}
\end{center}
\end{figure}

\begin{figure}[H]
\begin{center}
\includegraphics[scale=0.5]{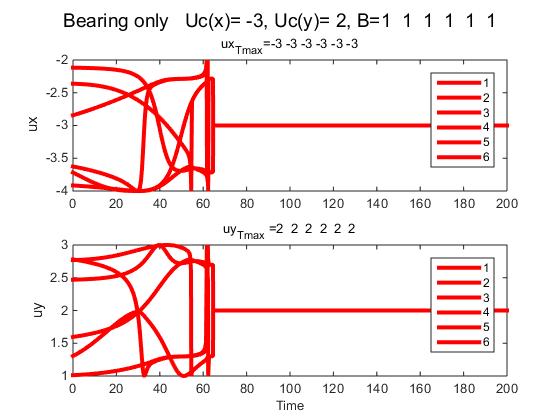}
\caption{Example3.2-bugs: Velocities behaviour }\label{fig-Ex3-Ball-U-32-Vel}
\end{center}
\end{figure}

\newpage

\underline{\textbf{Agents' trajectories}}

\begin{figure}[H]
\begin{center}
\includegraphics[scale=0.5]{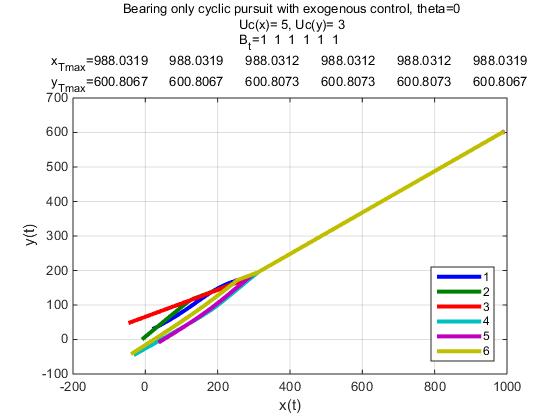}
\caption{Example3.1-bugs: Trajectories behaviour }\label{fig-Ex3-Ball-U53-Traj}
\end{center}
\end{figure}

\begin{figure}[H]
\begin{center}
\includegraphics[scale=0.5]{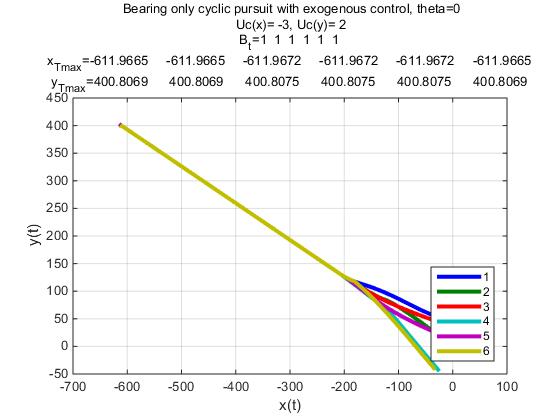}
\caption{ Example3.2-bugs: Trajectories behaviour }\label{fig-Ex3-Ball-U-32-Traj}
\end{center}
\end{figure}

\begin{figure}[H]
\begin{center}
\includegraphics[scale=0.5]{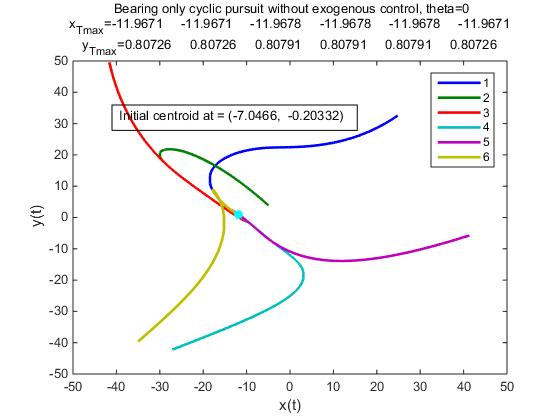}
\caption{Example3.3-bugs:  Trajectories behaviour in "bugs" cyclic pursuit, when $U_c=(0,0)^T$}\label{fig-Ex3-Ball-U00-Traj}
\end{center}
\end{figure}

\begin{remark} 
The broadcast signal being received by all is a particular case of the broadcast signal being received by a random set of agents. Since we do not enforce it, in order to enable the general case of a random, not complete, set of  agents receiving the broadcast signal we need to enforce $\|Uc\| \leq 1$  
\end{remark}

\newpage
\appendix
\addcontentsline{toc}{section}{Appendices}
\section{About matrices}\label{matrices}
Following \cite{HJbook}, let $\mathbf{M}_n$ denote the class of all $n \times n$ matrices. Two matrices $A, B \in M_n$ are similar, denoted by $A \sim B$, if there exists an invertible (non-singular) matrix $S \in M_n$ s.t. $A = SBS^{-1}$. Similar matrices are just different basis representation of a single linear transformation.Similar matrices have the same characteristic polynomial, c.f. Theorem 1.3.3 in \cite{HJbook} and therefore the same eigenvalues

\subsection{Algebraic and geometric multiplicity of eigenvalues}\label{EigMultiplicity}

Let $\lambda$ be an eigenvalue of an arbitrary matrix $A \in M_n$ with an associated eigenvector $v \in \mathbb{C}^n$.\\
\textbf{Definitions:}
\begin{itemize}
  \item The spectrum of $A \in M_n$  is the set of all the eigenvalues of $A$, denoted by $\sigma ( A )$.
  \item The spectral radius of $A$ is $\rho ( A ) = max {| \lambda | : \lambda \in \sigma ( A ) }$.
  \item  For a given $\lambda \in \sigma(A)$ , the set of all vectors $v \in \mathbb{C}^n$ satisfying $A v = \lambda v$ is called the eigenspace of $A$ associated with the eigenvalue $\lambda$ . Every nonzero element of this eigenspace is an eigenvector of $A$ associated with $\lambda$
  \item The \emph{algebraic multiplicity} of $\lambda$ is its multiplicity as a root of the characteristic polynomial $det(\lambda I - A)$
  \item The \emph{geometric multiplicity} of  $\lambda$ is the dimension of the eigenspace associated with $\lambda$, i.e. the number of linearly independent eigenvectors associated with that eigenvalue.
  \item  We say that $\lambda$ is simple if its algebraic multiplicity is 1; it is semisimple if its algebraic and geometric multiplicities are equal.
  \item The algebraic multiplicity of an eigenvalue is larger than or equal to its geometric multiplicity.
  \item  We say that $A$ is defective if the geometric multiplicity of some eigenvalue is less than its algebraic multiplicity
\end{itemize}

\subsection{Diagonizable matrices}\label{DiagMat}

\textbf{Definition}: If $A \in M_n$ is similar to a diagonal matrix, then $A$ is said to be diagonizable

\begin{theorem}\label{th-diagonizable}
See Theorem 1.3.7 in \cite{HJbook}.

The matrix $A \in M_n$ is diagonizable iff there are $n$ linearly independent vectors, $v^{(1)},v^{(2)}, ....,v^{(n)}$,  each of which is an eigenvector of $A$. If $v^{(1)},v^{(2)}, ....,v^{(n)}$ are linearly independent eigenvectors of $A$ and $S=[v^{(1)},v^{(2)}, ....,v^{(n)} ]$ then $S^{-1} A S$ is a diagonal matrix $\Lambda$ and the diagonal entries of $\Lambda$ are the eigenvalues of $A$
\end{theorem}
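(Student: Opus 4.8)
The plan is to prove both directions by the standard column-block manipulation of the identity $AS = S\Lambda$. First I would treat the ``if'' direction: assume $v^{(1)}, \dots, v^{(n)}$ are linearly independent eigenvectors, say $A v^{(k)} = \lambda_k v^{(k)}$, and set $S = [\,v^{(1)}\ \cdots\ v^{(n)}\,]$. Linear independence of the columns makes $S$ invertible, since a square matrix is nonsingular exactly when its columns are linearly independent. Computing column by column, the $k$-th column of $AS$ is $A v^{(k)} = \lambda_k v^{(k)}$, which is precisely the $k$-th column of $S\Lambda$ where $\Lambda = \operatorname{diag}(\lambda_1, \dots, \lambda_n)$; hence $AS = S\Lambda$, i.e. $S^{-1} A S = \Lambda$, so $A$ is similar to a diagonal matrix and therefore diagonizable.

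For the ``only if'' direction I would run the same computation in reverse: if $A$ is diagonizable there is an invertible $S$ with $S^{-1} A S = \Lambda$ diagonal, so $A S = S \Lambda$. Denoting the columns of $S$ by $s^{(1)}, \dots, s^{(n)}$, matching the $k$-th columns of the two sides gives $A s^{(k)} = \Lambda_{kk}\, s^{(k)}$, so each $s^{(k)}$ is an eigenvector of $A$; and since $S$ is invertible its columns are linearly independent. This exhibits the required $n$ linearly independent eigenvectors.

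It remains to identify the diagonal entries of $\Lambda$ with the eigenvalues of $A$. Here I would invoke the fact already recorded in this appendix — similar matrices have the same characteristic polynomial, cf. Theorem 1.3.3 in \cite{HJbook} — so that $A \sim \Lambda$ forces $\sigma(A) = \sigma(\Lambda)$; and the eigenvalues of $\Lambda$ are plainly its diagonal entries, since its characteristic polynomial factors as $\prod_{k=1}^{n} (t - \Lambda_{kk})$. Combined with the construction of the first paragraph, this shows the scalars appearing on the diagonal of $S^{-1}AS$ are exactly the eigenvalues of $A$, with the one attached to column $k$ being the eigenvalue whose eigenvector is $v^{(k)}$.

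There is essentially no obstacle in this argument; the only points requiring care are the bookkeeping in the identity $AS = S\Lambda$ — namely that right-multiplication by a diagonal matrix rescales the columns of $S$, not its rows — and the elementary equivalence between invertibility of a square matrix and linear independence of its columns. Both are standard, and no case analysis, estimates, or finer structural input is needed.
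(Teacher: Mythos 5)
Your proof is correct and follows essentially the same route as the paper, which simply cites Theorem 1.3.7 of Horn and Johnson and (in a suppressed proof sketch) uses the identical column-by-column argument that $AS=S\Lambda$ with $S$ nonsingular is equivalent to the columns of $S$ being $n$ linearly independent eigenvectors. Your additional appeal to the invariance of the characteristic polynomial under similarity, already recorded in the appendix, cleanly justifies the final claim that the diagonal entries of $\Lambda$ are the eigenvalues of $A$.
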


\begin{lemma}\label{distinct-k}
Let $\lambda_1,...,\lambda_k; \quad k \geq 2$ be distinct eigenvalues of $A \in M_n$ and suppose $v^{(i)}$ is an eigenvector associated with $\lambda_i; \quad i=1,....,n$. Then the vectors $[v^{(1)},v^{(2)}, ....,v^{(k)} ]$ are linearly independent.
\end{lemma}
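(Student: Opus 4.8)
The statement is the classical fact that eigenvectors associated with distinct eigenvalues are linearly independent. The plan is to argue by induction on $k$, the number of distinct eigenvalues under consideration. The base case $k=1$ is trivial: a single eigenvector is nonzero by definition, hence linearly independent. Alternatively one can start at $k=2$ as the lemma suggests: if $c_1 v^{(1)} + c_2 v^{(2)} = \mathbf{0}$, apply $A$ to get $c_1 \lambda_1 v^{(1)} + c_2 \lambda_2 v^{(2)} = \mathbf{0}$, and also multiply the original relation by $\lambda_1$; subtracting gives $c_2(\lambda_2 - \lambda_1) v^{(2)} = \mathbf{0}$, so since $\lambda_2 \neq \lambda_1$ and $v^{(2)} \neq \mathbf{0}$ we get $c_2 = 0$, and then $c_1 = 0$.

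For the inductive step, suppose the result holds for any $k-1$ distinct eigenvalues, and let $\lambda_1, \dots, \lambda_k$ be distinct with eigenvectors $v^{(1)}, \dots, v^{(k)}$. Assume a dependence relation
\begin{equation*}
\sum_{i=1}^{k} c_i v^{(i)} = \mathbf{0}.
\end{equation*}
Apply $A$ to both sides, using $A v^{(i)} = \lambda_i v^{(i)}$, to obtain $\sum_{i=1}^{k} c_i \lambda_i v^{(i)} = \mathbf{0}$. Now multiply the original relation by $\lambda_k$ and subtract, which eliminates the $v^{(k)}$ term:
\begin{equation*}
\sum_{i=1}^{k-1} c_i (\lambda_i - \lambda_k) v^{(i)} = \mathbf{0}.
\end{equation*}
By the induction hypothesis the vectors $v^{(1)}, \dots, v^{(k-1)}$ are linearly independent, so $c_i (\lambda_i - \lambda_k) = 0$ for each $i = 1, \dots, k-1$. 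Since the eigenvalues are distinct, $\lambda_i - \lambda_k \neq 0$, forcing $c_i = 0$ for $i = 1, \dots, k-1$. Substituting back into the original relation leaves $c_k v^{(k)} = \mathbf{0}$, and since $v^{(k)} \neq \mathbf{0}$ we conclude $c_k = 0$ as well. Hence all coefficients vanish and the vectors are linearly independent.

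There is no real obstacle here; the only point requiring a little care is making sure the elimination step is set up so that the induction hypothesis applies to exactly $k-1$ vectors with $k-1$ distinct eigenvalues, which it does. I would present the $k=2$ computation explicitly (as it also serves as the base case in spirit) and then give the general inductive step as above, noting that this lemma immediately yields, together with Theorem~\ref{th-diagonizable}, that a matrix in $M_n$ with $n$ distinct eigenvalues is diagonizable.
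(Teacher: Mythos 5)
Your proof is correct. Note, however, that the paper itself does not supply an argument for this lemma at all: it simply points to Lemma~1.3.8 of the Horn--Johnson reference, so there is no in-paper proof to compare against. The proof in that reference runs by contradiction on a dependence relation with the fewest nonzero coefficients, applying a factor $(A-\lambda_j I)$ to shorten it; your induction on $k$, eliminating the $v^{(k)}$ term by applying $A$ and subtracting $\lambda_k$ times the original relation, is the standard equivalent packaging of the same idea and is entirely sound. Your base case and the observation that the elimination step hands the induction hypothesis exactly $k-1$ vectors with $k-1$ distinct eigenvalues are both handled correctly, and your closing remark that the lemma combines with Theorem~\ref{th-diagonizable} to give Theorem~\ref{distinct-all} matches how the paper actually uses it. The only cosmetic point worth flagging is that the index range in the statement should read $i=1,\dots,k$ rather than $i=1,\dots,n$; that is a typo in the paper, not a gap in your argument.
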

Proof of Lemma 1.3.8 in \cite{HJbook}

\begin{theorem}\label{distinct-all}
If $A \in M_n$ has $n$ distinct eigenvalues, then $A$ is diagonizable
\end{theorem}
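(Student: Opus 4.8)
The plan is to obtain the statement as an immediate combination of the two preceding results: Lemma \ref{distinct-k}, which guarantees that eigenvectors attached to distinct eigenvalues are linearly independent, and Theorem \ref{th-diagonizable}, which characterizes diagonizability by the existence of a full set of eigenvectors.

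First I would unpack the hypothesis: $A \in M_n$ has $n$ distinct eigenvalues, say $\lambda_1, \dots, \lambda_n$. For each index $i$ I choose a nonzero eigenvector $v^{(i)}$ associated with $\lambda_i$; this is always possible since each eigenvalue has a nonempty eigenspace. Then I apply Lemma \ref{distinct-k} with $k = n$: because $\lambda_1, \dots, \lambda_n$ are pairwise distinct, the vectors $v^{(1)}, v^{(2)}, \dots, v^{(n)}$ are linearly independent. Thus $A$ possesses $n$ linearly independent eigenvectors in $\mathbb{C}^n$.

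The last step is simply to quote Theorem \ref{th-diagonizable}, which asserts that a matrix in $M_n$ is diagonizable exactly when it admits $n$ linearly independent eigenvectors. Setting $S = [v^{(1)}, v^{(2)}, \dots, v^{(n)}]$, that theorem furthermore yields $S^{-1} A S = \Lambda$ with $\Lambda$ the diagonal matrix whose entries are $\lambda_1, \dots, \lambda_n$, so the diagonalizing similarity is in fact exhibited explicitly, and the proof is complete.

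There is essentially no obstacle: the substantive content lies entirely in Lemma \ref{distinct-k}, whose own argument runs by induction on $k$ (take a dependence relation among $v^{(1)}, \dots, v^{(k)}$, hit it with $A - \lambda_k I$ to annihilate the last term, then invoke the inductive hypothesis together with the inequalities $\lambda_i \neq \lambda_k$ to force every coefficient to vanish). The only point worth flagging is that the implication is one-directional — having $n$ distinct eigenvalues is sufficient but not necessary for diagonizability — but the converse is neither claimed nor needed here.
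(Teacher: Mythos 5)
Your proposal is correct and follows exactly the paper's own argument: invoke Lemma \ref{distinct-k} to obtain $n$ linearly independent eigenvectors from the $n$ distinct eigenvalues, then conclude by Theorem \ref{th-diagonizable}. The extra remarks on the explicit similarity $S^{-1}AS=\Lambda$ and on the one-directional nature of the implication match the paper's surrounding notes and add nothing that needs correction.
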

\begin{proof}
Since all eigenvalues are distinct Lemma \ref{distinct-k} ensures that the associated eigenvectors are linearly independent and thus, according to Theorem \ref{th-diagonizable}, $A$ is diagonizable
\end{proof}
\textbf{Notes:}
\begin{enumerate}
  \item Having distinct eigenvalues is sufficient for diagonizability, but not necessary.
  \item A matrix is diagonizable iff it is non-defective, i.e. it has no eigenvalue with geometric multiplicity strictly less than its algebraic multiplicity
\end{enumerate}

\subsection{Left eigenvectors}\label{LeftEigvec}
\textbf{Definition}: A non-zero vector $y \in \mathcal{C}^n$ is a left eigenvector of $A \in M_n$ associated with eigenvalue $\lambda$ of $A$ if $y^*A = \lambda y^*$.
From \cite{HJbook}, Theorem 1.4.12, we have the following relationship between left and right eigenvectors and the multiplicities of the corresponding eigenvalue:
\begin{theorem}\label{left-right}
Let $\lambda \in \mathcal{C}$ be an eigenvalue of $A \in M_n$ associated with right eigenvector $x$ and left eigenvector $y^*$. Then the following hold:
\begin{enumerate}[label=(\alph*)]
\item If $\lambda$ has algebraic multiplicity 1, then $y^*x \neq 0$
\item If $\lambda$ has geometric multiplicity 1, then it has algebraic multiplicity 1 iff $y^*x \neq 0$
\end{enumerate}
\end{theorem}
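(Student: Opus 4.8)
The plan is to reduce everything to the Jordan canonical form of $A$ over $\mathbb{C}$. First I would record the standard fact that the geometric multiplicity of $\lambda$ as an eigenvalue of $A$ equals $n-\operatorname{rank}(A-\lambda I)$, and that this is also the dimension of the left null space $\{z^*: z^*(A-\lambda I)=0\}$; hence the space of right eigenvectors for $\lambda$ and the space of left eigenvectors for $\lambda$ always have the same dimension. In part (a) the geometric multiplicity is automatically $1$, since $1\le$ geometric $\le$ algebraic $=1$ (the inequality between the multiplicities is the one recorded in Appendix \ref{EigMultiplicity}). Whenever that common dimension is $1$, every right eigenvector is a nonzero scalar multiple of a fixed $x$ and every left eigenvector is a nonzero scalar multiple of a fixed $y^*$; therefore $y^*x$ is either zero for all admissible pairs or nonzero for all of them, so it suffices to test one convenient pair.

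Next I would exhibit that convenient pair from a Jordan basis. Write $A=SJS^{-1}$. Because the geometric multiplicity of $\lambda$ is $1$, $J$ has exactly one Jordan block $J_m(\lambda)$ for $\lambda$, and its size $m$ is the algebraic multiplicity of $\lambda$; say this block occupies indices $p+1,\dots,p+m$. From $AS=SJ$ one reads off that $s:=\operatorname{col}_{p+1}(S)$ satisfies $As=\lambda s$, so $s$ is a right eigenvector; from $S^{-1}A=JS^{-1}$ one reads off that $w^*:=\operatorname{row}_{p+m}(S^{-1})$ satisfies $w^*A=\lambda w^*$, so $w^*$ is a left eigenvector. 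Since $S^{-1}S=I$ we get $w^*s=\operatorname{row}_{p+m}(S^{-1})\operatorname{col}_{p+1}(S)=\delta_{p+m,\,p+1}=\delta_{m,1}$. Thus $w^*s\neq 0$ exactly when $m=1$, and $w^*s=0$ when $m\ge 2$.

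Combining the two steps: in part (a), algebraic multiplicity $1$ forces $m=1$, hence $w^*s\neq0$, hence $y^*x\neq0$ for the given eigenvectors. In part (b), under the hypothesis that the geometric multiplicity is $1$, we have $y^*x\neq0\iff w^*s\neq0\iff m=1\iff$ the algebraic multiplicity is $1$, which is the asserted equivalence. The one place where a careless argument could go wrong — and hence the step I would write most carefully — is the bookkeeping identifying the right eigenvector with the \emph{first} column and the left eigenvector with the \emph{last} row of the Jordan block inside $S$ and $S^{-1}$, together with the observation that $S^{-1}S=I$ collapses their pairing to a single Kronecker delta.

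An alternative route, if one prefers to avoid Jordan form, is via the adjugate: when $\operatorname{rank}(A-\lambda I)=n-1$ the columns of $\operatorname{adj}(\lambda I-A)$ span the $1$-dimensional right eigenspace and its rows span the $1$-dimensional left eigenspace, so $\operatorname{adj}(\lambda I-A)=c\,xy^*$ for some scalar $c$, and $c\neq0$ because a rank $n-1$ matrix has a rank $1$ adjugate. Using $\operatorname{tr}\operatorname{adj}(\lambda I-A)=p'(\lambda)$, where $p$ is the characteristic polynomial, one gets $c\,(y^*x)=p'(\lambda)$, which is nonzero if and only if $\lambda$ is a simple root of $p$. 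This yields (a) at once and (b) with a little extra attention to the geometric-multiplicity hypothesis; I would keep the Jordan-form argument as the primary one since it treats both parts uniformly.
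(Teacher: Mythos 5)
Your proof is correct. Note, however, that the paper does not actually prove this statement: it is quoted verbatim from Horn and Johnson (Theorem 1.4.12 in \cite{HJbook}) with no argument supplied, so there is no ``paper proof'' to compare against. Your Jordan-form argument is a sound, self-contained replacement. The two key reductions are both handled properly: (i) since the geometric multiplicity of $\lambda$ equals $n-\operatorname{rank}(A-\lambda I)$, the right and left eigenspaces have the same dimension, so when that dimension is $1$ the sign of $y^*x$ (zero or nonzero) is independent of the choice of representatives, and it suffices to test one pair; (ii) the bookkeeping with $AS=SJ$ and $S^{-1}A=JS^{-1}$ correctly identifies the \emph{first} column of $S$ in the block as a right eigenvector and the \emph{last} row of $S^{-1}$ in the block as a left eigenvector (the last row has no superdiagonal $1$ to its right within the block), and $S^{-1}S=I$ then gives $w^*s=\delta_{m,1}$, which is exactly the dichotomy needed for both parts. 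Your adjugate alternative via $\operatorname{tr}\operatorname{adj}(\lambda I-A)=p'(\lambda)$ is in fact closer in spirit to the argument in \cite{HJbook} itself; either route is acceptable, and the Jordan-form version has the advantage of making the role of the single block of size $m$ completely transparent.
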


\subsection{About non-symmetric real matrices}\label{non-sym}
\begin{itemize}
  \item the eigenvalues of non-symmetric real $n\times n$ matrix are real or come in complex conjugate pairs
  \item the eigenvectors are not orthonormal in general and may not even span an n-dimensional space
      \begin{itemize}
        \item Incomplete eigenvectors can occur only when there are degenerate eigenvalues, i.e. eigenvalues with algebraic multiplicity greater than 1, but do not always occur in such cases
        \item Incomplete eigenvectors never occur for the class of normal matrices
      \end{itemize}
  \item Diagonalization theorem: an $n\times n$ matrix $A$ is diagonizable iff $A$ has $n$ linearly independent eigenvectors
\end{itemize}

\subsection{Normal matrices}\label{Normal}
\begin{definition}\label{NormalMat}
A matrix $A \in M_n$ is called normal if $A^* A = A A^*$
\end{definition}
\begin{definition}\label{Hermitian}
A matrix $A \in M_n$ is called Hermitian if $A^* =  A$
\end{definition}
\begin{theorem}\label{SpectralNormal}
If $A \in M_n$ has eigenvalues $\lambda_1, \lambda_2,.....,\lambda_n$ the following statements are equivalent:
\begin{enumerate}[label=(\alph*)]
\item $A$ is normal
\item $A$ is unitarily diagonizable
\item $\sum_{i=1}^n \sum_{j=1}^n |a_{ij}|^2 = \sum_{j=1}^n |\lambda_j|^2$
\item There is an orthonormal set of $n$ eigenvectors of $A$
\end{enumerate}
\end{theorem}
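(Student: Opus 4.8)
The plan is to prove the cycle of implications (a) $\Rightarrow$ (b) $\Rightarrow$ (d) $\Rightarrow$ (a) together with the equivalence (b) $\Leftrightarrow$ (c), after which all four statements are mutually equivalent. The backbone of the argument is Schur's unitary triangularization theorem (see \cite{HJbook}): every $A \in M_n$ can be written $A = U T U^*$ with $U$ unitary and $T$ upper triangular, the diagonal entries of $T$ being $\lambda_1, \dots, \lambda_n$ in some order.

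The technical heart is the lemma that \emph{an upper triangular matrix which is normal must be diagonal}. First I would observe that $A = U T U^*$ is normal if and only if $T$ is normal, since $A^* A = U T^* T U^*$, $A A^* = U T T^* U^*$, and $U$ is invertible. To prove the lemma I would compare diagonal entries of $T^* T$ and $T T^*$. For the $(1,1)$ entry, $(T^*T)_{11} = |t_{11}|^2$ while $(TT^*)_{11} = \sum_{k=1}^n |t_{1k}|^2$, so equality forces $t_{1k} = 0$ for all $k \ge 2$. Then I proceed by induction on the row index: assuming rows $1, \dots, m-1$ have already been shown to be diagonal, the entry $(T^*T)_{mm}$ reduces to $|t_{mm}|^2$ (the terms $|t_{km}|^2$ with $k < m$ vanish by the inductive hypothesis), whereas $(TT^*)_{mm} = \sum_{k=m}^n |t_{mk}|^2$; equating gives $t_{mk} = 0$ for all $k > m$. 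Hence $T$ is diagonal, so $A = U T U^*$ is unitarily diagonalizable, which is (a) $\Rightarrow$ (b).

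For (b) $\Rightarrow$ (d): if $A = U \Lambda U^*$ with $\Lambda$ diagonal and $U$ unitary, then $A U = U \Lambda$ shows that the $j$-th column $u_j$ of $U$ satisfies $A u_j = \lambda_j u_j$, and the columns of a unitary matrix form an orthonormal set of $n$ vectors. For (d) $\Rightarrow$ (a): placing an orthonormal set of $n$ eigenvectors as the columns of a matrix $U$ produces a unitary $U$ with $A U = U \Lambda$, i.e. $A = U \Lambda U^*$; then $A^* A = U \Lambda^* \Lambda U^* = U \Lambda \Lambda^* U^* = A A^*$ because diagonal matrices commute, so $A$ is normal. This closes the cycle and in particular gives (b) $\Leftrightarrow$ (d).

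Finally, for (b) $\Leftrightarrow$ (c) I would use that the Frobenius norm $\sum_{i,j} |a_{ij}|^2 = \operatorname{tr}(A^* A)$ is unitarily invariant: writing $A = U T U^*$ by Schur, $\operatorname{tr}(A^*A) = \operatorname{tr}(T^*T) = \sum_{i \le j} |t_{ij}|^2 = \sum_{j=1}^n |\lambda_j|^2 + \sum_{i<j} |t_{ij}|^2$. Thus (c) holds precisely when all strictly upper triangular entries of $T$ vanish, i.e. when $T$ is diagonal, which by Schur's theorem is exactly statement (b). The main obstacle in the whole argument is the inductive diagonality lemma; the remaining steps are routine manipulations with unitary matrices and the trace, though one must be careful in the induction to track exactly which entries are already known to vanish before equating the $(m,m)$ diagonal entries of $T^*T$ and $TT^*$.
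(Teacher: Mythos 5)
Your proof is correct. Note that the paper itself does not prove this statement: Theorem \ref{SpectralNormal} is quoted in the appendix as a standard fact from Horn and Johnson's \emph{Matrix Analysis} (it is essentially their Theorem 2.5.3 on normal matrices), so there is no in-paper argument to compare against. Your route --- Schur unitary triangularization, the inductive lemma that a normal upper triangular matrix is diagonal, the cycle (a) $\Rightarrow$ (b) $\Rightarrow$ (d) $\Rightarrow$ (a), and the Frobenius-norm computation $\operatorname{tr}(A^*A)=\operatorname{tr}(T^*T)$ for (b) $\Leftrightarrow$ (c) --- is the standard textbook proof, and all the steps check out; in particular the induction correctly uses the vanishing of the earlier rows' off-diagonal entries to reduce $(T^*T)_{mm}$ to $|t_{mm}|^2$ before comparing with $(TT^*)_{mm}=\sum_{k\ge m}|t_{mk}|^2$. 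One minor point worth making explicit in the (c) $\Rightarrow$ (b) direction: since the diagonal of \emph{any} Schur form of $A$ consists of the eigenvalues and $\operatorname{tr}(A^*A)$ is independent of the choice of decomposition, the hypothesis (c) forces the strictly upper triangular part of every Schur form to vanish, which is what you need.
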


\begin{remark}
 All normal matrices are diagonizable but not all diagonizable matrices are normal.
\end{remark}

\subsection{Unitary matrices and unitary similarity}\label{UniMat}
Unitary matrices, $U \in M_n$, are non-singular matrices such that  $U^{-1} = U^*$, i.e $U^*U =U U^* =I$. A real matrix $U \in M_n(\mathbb{R})$ is real orthogonal if $U^T U=I$.
The following are equivalent:
\begin{enumerate}[label=(\alph*)]
\item $U$ is unitary
\item $U$ is non-singular and $U^{-1} = U^*$
\item $U U^* =I$
\item $U^*$ is unitary
\item The columns of $U$ are orthonormal
\item The rows of $U$ are orthonormal
\item For all $x \in \mathcal{C}^n, \quad \|x\|_2=\|Ux\|_2$
\end{enumerate}
\begin{defn}:
\begin{itemize}
  \item $A$ is unitarily similar to $B$ if there is a unitary matrix $U$ s.t. $A=U B U^*$
  \item $A$ is unitarily diagonizable if it is unitarily similar to a diagonal matrix
\end{itemize}
\end{defn}

\section{Laplacian representation of Graphs }\label{preliminary}
Graphs provide natural abstraction for how information is shared between agents in a network. Algebraic graph theory associate matrices, such as Adjacency and Laplacian, with graphs, cf. \cite{Mesbahi-book}.
In this appendix some useful facts from algebraic graph theory are presented. Given a multi-agent system, the network can be represented by a directed or an undirected graph  $G=(V,E)$, where  $V$ is a finite set of vertices, labeled by $i; \quad i=1,..., n$ representing the agents, and $E$ is the set of edges, $E \in [V \times V]$, representing inter-agent information exchange links.\footnote{
Vertices are also referred to as nodes and the two terms will be used interchangeably}
 A simple graph contains no self-loops, namely there is no edge from a node to itself. If the graph is undirected then the edge set $E$ contains unordered pairs of vertices.
 In directed graphs (digraphs) the edges are ordered pairs of vertices.
We say that the graph is connected if for every pair of vertices in $V$  there is a path with those vertices as its end vertices.
If this is not the case, the graph is called disconnected.
We refer to a connected graph as having one connected component. A disconnected graph has more than one component.

\subsection{Directed graphs - digraphs}\label{Digraph}
A directed graph (or digraph), denoted by $D= (V, E)$, is a graph whose edges are ordered pairs of vertices. For the ordered pair $(i, j) \in E$, when vertices $v_i, v_j$ are labelled $i, j$,  $i$ is said to be the tail  of the edge, while $j$ is its head.

\textbf{Definitions:}\\
 \begin{enumerate}
  \item A digraph is called strongly connected if for every pair of vertices there is a \emph{directed path} between them.
   \item The digraph is called weakly connected if it is connected when viewed as a graph, that is, a disoriented digraph.
   \item A digraph has a rooted out-branching, or spanning tree, if there exists a vertex $r$ (the root) such that for every other vertex $i \neq r \in N$ there is a directed path from $r$ to $i$. In this case, every  $i \neq r \in N$ is said to be reachable from $r$. In strongly connected digraphs each node is a root.
   \item A node is called balanced if  the total weight of edges entering the node and leaving the same node are equal
    \item If all nodes in the digraph are balanced then the digraph is called balanced

\end{enumerate}

\subsubsection{Properties of Laplacian matrices associated with digraphs}\label{L_D}
\begin{itemize}
  \item The non-symmetric Laplacian, $L$, associated with a  digraph $G$ of order $n$  has the following properties:
      \begin{enumerate}[label=(\alph*)]
          \item\label{itm:first}$L$  has at least one zero eigenvalue and all remaining eigenvalues have positive real part
          \item\label{itm:second}  $L$ has a simple zero eigenvalue and all other eigenvalues have positive real part if and only if $G$ has a directed spanning tree
          \item\label{itm:third} $L$ is real, therefore any complex eigenvalues must occur in conjugate pairs\footnote{ the eigenvalues of a real non-symmetric matrix may include real values, but may also include pairs of complex conjugate values}
          \item\label{itm:fourth} There is a right eigenvector of ones, $\mathbf{1}_n$, associated with the zero eigenvalue, i.e. $L \mathbf{1}_n = \mathbf{0}_n$
          \item\label{itm:fifth}  The left eigenvector of $L$ corresponding to $\lambda=0$, denoted by $w_l$ is positive and $\sum_{i=1}^n w_l(i) =1$,
           \item\label{itm:sixth} $w_l = \mathbf{1}_n^T$ \emph{if and only if the digraph is balanced}

      \end{enumerate}

  \item If the Laplacian $L$ of the digraph is a normal, i.e. $L L^T = L^T L$, then
  \begin{enumerate}
    \item There exists an orthonormal set of $n$ eigenvectors of $L$
    \item $L$ is unitarily diagonizable, i.e. $L = U \Lambda U^*$, where $U$ is a unitary matrix of eigenvectors and $\Lambda$ is a diagonal matrix of eigenvalues.
    \item The digraph must be balanced and thus $w_l = \mathbf{1}_n^T$
  \end{enumerate}
\end{itemize}

\section{Properties of circulant matrices}\label{circulant}
A circulant matrix is an $ n \times n$ matrix having the form
\begin{equation}\label{eq-circ}
   C  =
          \begin{bmatrix}
           c_0 & c_1 & c_2  & \hdots & c_{n-1} \\
             c_{n-1} & c_0 & c_1 & \hdots & c_{n-2} \\
	   &&\vdots \\
            c_1 & c_2  && \hdots  & c_0 \\
          \end{bmatrix}
  \end{equation}

which can also be characterized as an $ n \times n$  matrix $C$ with entry $(k,j)$ given by
\begin{equation*}
  C_{k,l}=c_{(l-k) mod (n)}
\end{equation*}

Every $ n \times n$  circulant matrix $C$ has eigenvectors  (cf. \cite{Gray}, \cite{RamirezPhD})
\begin{equation}\label{eig-vec}
  v_k=\frac{1}{\sqrt{n}} \left ( 1, e^{-2 \pi jk/n}, e^{-4 \pi jk/n},\hdots, e^{-2 \pi jk(n-1)/n}  \right)^T; \quad k \in \left \{0, 1, \hdots, n-1 \right \}
\end{equation}
where $ j=\sqrt{-1}$, with corresponding eigenvalues
\begin{equation}\label{eig}
  \lambda_k = \sum_{l=0}^{n-1} c_l e^{-2 \pi j lk/n}
\end{equation}

From the definition of eigenvalues and eigenvectors we have
\begin{equation*}
  C v_k = \lambda_k v_k; \quad k= 0,1, \hdots, n-1
\end{equation*}
which can be written as a single matrix equation
\begin{equation*}
  C U= U \Lambda
\end{equation*}
where $\Lambda = diag(\lambda_k); \quad k=0, \hdots, n-1$ and
\begin{eqnarray*}
  U &=& \left [v_0, v_1, \hdots, v_{n-1} \right ] \\
   &=& \frac{1}{\sqrt{n}}\left [e^{-2\pi j mk}; \quad m,k = 0, \hdots, n-1 \right ]
\end{eqnarray*}
$U$ is a unitary matrix, i.e. $ UU^* = U^*U=I$ (cf. \cite{Gray}, proof by direct computation) and
\begin{equation}\label{C-diagonalization}
  C=U \Lambda U^*
\end{equation}
Note that $F_n = \sqrt{n} U^*$ is the known Fourier matrix.

\subsection{Cyclic pursuit}\label{cyclic}
The Laplacian representing cyclic pursuit is a special case of circulant matrix
\begin{equation}\label{eq-cyclic}
   L  =
          \begin{bmatrix}
           1 & -1 & 0  & 0 & \hdots & 0 \\
             0 & 1 & -1 & 0 & \hdots & 0 \\
	   \vdots && &&& \vdots \\
            -1 & 0  && & \hdots  & 1 \\
          \end{bmatrix}
  \end{equation}
Thus the eigenvalues of the cyclic pursuit Laplacian are
\begin{equation}\label{eig-cyclic}
  \lambda_k = 1-  e^{-2 \pi j k/n}; \quad k=0, ..., n-1
\end{equation}
and the eigenvectors are given by eq. (\ref{eig-vec}).

\section{Proof of mutual capture existence in finite time, in  non-linear cyclic pursuit without broadcast control - Lemma 1.1 in \cite{Richardson}}\label{proof_Tm}
This proof, sketched in \cite{Richardson}, is brought here for completeness.
Let $p_i(t)$ be the position of agent $i$ at time $t$ and let agent $i$ chase $i+1$, where $i$ is mod $n$. Denote by 
$d_i(t)$ is the distance between $i$ to $i+1$ at time $t$, i.e. 
 $d_i(t) = \|p_{i+1}(t)-p_i(t)\|$.
 
 The dynamics of the agents are modeled by
\begin{equation}\label{CyclicNonLinearControl}
 \begin{cases}
  \dot{p_i}=\frac{p_{i+1}(t)-p_i(t)}{d_i(t)} \quad \text{if} \quad d_i(t) >0\\
  p_i(t)=p_{i+1}(t) \quad \forall t\geq \hat{t} \quad \text{s.t.} \quad d_i(\hat{t})=0
  \end{cases} 
\end{equation}

In the sequel, for simplicity,  we shall omit specific mention of $t$, whenever self explanatory.

\begin{lem}
$n$ agents in cyclic pursuit, modelled by (\ref{CyclicNonLinearControl}), will collide (gather) within a finite time given by $T_m \leq t_0+ 2 n \sum_{i=1}^n d_i(t_0)$, where $d_i(t_0)$ are the initial distances between agents and $T_m$ is the time of mutual capture (termination time).
\end{lem}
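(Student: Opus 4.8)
The plan is to mimic Richardson's argument by finding a scalar quantity that decreases at a uniformly bounded rate. The natural candidate is the sum of distances $S(t) = \sum_{i=1}^n d_i(t)$. By Lemma~\ref{BugsProperties}(a) we already know each $d_i$ is non-increasing while all distances are positive, so $S(t)$ is non-increasing; the point is to get a strictly negative upper bound on $\dot S$ that does not degenerate. First I would record the differential identity $\dot d_i = \langle \dot p_{i+1}-\dot p_i,\; \dot p_i\rangle = u_i^T u_{i+1} - 1$, where $u_i = (p_{i+1}-p_i)/d_i$ is the unit pursuit vector, using $\|\dot p_i\| = \|u_i\| = 1$. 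The algebraic identity $u_i^T u_{i+1} - 1 = -\tfrac12\|u_{i+1}-u_i\|^2$ then gives $\dot d_i = -\tfrac12\|u_{i+1}-u_i\|^2$, so $\dot S = -\tfrac12\sum_{i=1}^n \|u_{i+1}-u_i\|^2$.

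Next I would lower-bound $\sum_{i=1}^n \|u_{i+1}-u_i\|^2$. By Cauchy--Schwarz, $\sum_i \|u_{i+1}-u_i\|^2 \ge \tfrac1n\bigl(\sum_i \|u_{i+1}-u_i\|\bigr)^2$, which reduces the problem to showing $\sum_{i=1}^n \|u_{i+1}-u_i\| > 1$. For this I would use the closure relation $\sum_{i=1}^n d_i u_i = \sum_{i=1}^n (p_{i+1}-p_i) = 0$. Dotting with $u_1$ gives $\sum_i d_i\, u_1^T u_i = 0$ with all $d_i > 0$ and $u_1^T u_1 = 1 > 0$, so some index $j$ has $u_1^T u_j < 0$; since $\|u_1\| = \|u_j\| = 1$ this forces $\|u_j - u_1\| > 1$ (indeed $\|u_j-u_1\|^2 = 2 - 2u_1^Tu_j > 2$). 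By the triangle inequality along the chain $u_1, u_2, \dots, u_j$, $\sum_{i=1}^{j-1}\|u_{i+1}-u_i\| \ge \|u_j - u_1\| > 1$, and since this is a partial sum of the cyclic sum of nonnegative terms, $\sum_{i=1}^n \|u_{i+1}-u_i\| > 1$. Combining, $\dot S \le -\tfrac{1}{2n}$ as long as all $d_i > 0$.

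Finally I would integrate. Since the $d_i$ are continuous and there can be only finitely many collision events (each collision strictly reduces the number of agents, and the same bound $\dot S \le -\tfrac{1}{2\hat n} \le -\tfrac{1}{2n}$ holds with $\hat n \le n$ remaining agents between collisions), the function $S(t)$ is continuous, piecewise differentiable, and satisfies $S(t) \le S(t_0) - \tfrac{1}{2n}(t - t_0)$ on the interval where $S > 0$. Hence $S$ must reach $0$ at some time $T_m$ with $0 = S(T_m) \le S(t_0) - \tfrac{1}{2n}(T_m - t_0)$, i.e. $T_m \le t_0 + 2n\,S(t_0) = t_0 + 2n\sum_{i=1}^n d_i(t_0)$. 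At that time all remaining distances are zero, which by the merging rule (\ref{d_i-0}) means all original agents have gathered.

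The main obstacle is the step $\sum_{i=1}^n \|u_{i+1}-u_i\| > 1$: the differential identity and the integration are routine, but the strict inequality relies on the geometric closure relation $\sum d_i u_i = 0$ together with the observation that not all unit vectors $u_i$ can lie in a common open half-space, and one must be slightly careful that the bound is \emph{strict} and uniform (it does not degrade as the configuration flattens, precisely because $\|u_j - u_1\| > 1$ is strict whenever $u_1^T u_j < 0$). One should also note the edge case: if at some moment some $d_i = 0$ before mutual capture, the argument is simply restarted with the reduced agent set $\hat n$, and the bound only improves.
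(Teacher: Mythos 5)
Your proposal is correct and follows essentially the same route as the paper's own proof (Appendix on Richardson's lemma): the identity $\dot d_i=-\tfrac12\|\dot p_{i+1}-\dot p_i\|^2$, the Cauchy--Schwarz reduction, the closure relation $\sum_i d_i\dot p_i=0$ forcing some $\dot p_1^T\dot p_j<0$ and hence $\|\dot p_j-\dot p_1\|>1$, the triangle inequality, and integration. Your explicit handling of collisions before mutual capture (restarting with $\hat n\leq n$ agents) is a slightly more careful treatment of an edge case the paper dispatches with a one-line remark, but the argument is the same.
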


\begin{proof}
We show that there exists a time $T_m$ such that $d_i(T_m) =0$ for all $i$ or, since $d_i > 0 $ for all $i$
\begin{equation}\label{sum_di_Tm}
  \sum_{i=1}^n d_i(T_m)=0
\end{equation}
To show (\ref{sum_di_Tm}) we will show that there exists a positive real number $c>0$ such that $\sum_{i=1}^n \dot{d}_i(t \leq T_m) \leq -c$.

We assume that no agents have collided at time $t<T_m$. Note that upon our model, when two agents collide they become one and $n$ is reduced, therefore this assumption holds.
Given eq. (\ref{CyclicNonLinearControl}) we have
\begin{equation*}
  \sum_{i=1}^n d_i \dot{p}_i = 0
\end{equation*}
Thus
\begin{equation}\label{sum0}
  \dot{p}_1^T \sum_{i=1}^n d_i \dot{p}_i = \sum_{i=1}^n d_i \dot{p}_1^T \dot{p}_i = 0
\end{equation}
In order for (\ref{sum0}) to hold, there must exist an agent $j$ such that $\dot{p}_1^T \dot{p}_j <0$.
\begin{equation}\label{alpha}
  \dot{p}_1^T \dot{p}_j = \|\dot{p}_1\|\|\dot{p}_j\| \cos(\alpha) <0
\end{equation}
where $\alpha$ is the angle between $ \dot{p}_1$ and $ \dot{p}_j$. 
For eq. (\ref{alpha}) to hold we must have $\pi/2 < \alpha < 3 \pi/2 $ and since $\|\dot{p}_i\|=1; i=1,....,n$, we have $\|\dot{p}_j - \dot{p}_1 \| >1 $

\begin{figure}[H]
\begin{center}
\includegraphics[scale=0.5]{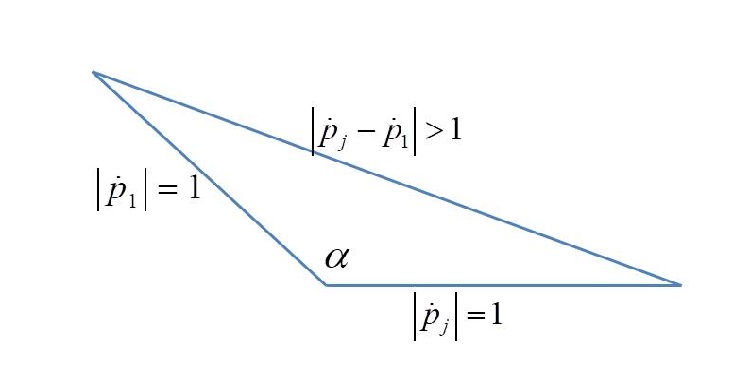}
\caption{$\|\dot{p}_j - \dot{p}_1 \|$ for $\dot{p}_1^T \dot{p}_j <0$ }
\end{center}
\end{figure}

From the definition of $d_i$ and $\dot{p}_i$ we have
\begin{eqnarray}
  d_i^2 &=& \|p_{i+1}-p_i \|^2  \\
  \dot{d}_i &=& \dot{p}_i^T (\dot{p}_{i+1}-\dot{p}_i ) \\
        & = &\dot{p}_i^T \dot{p}_{i+1} - 1\\
        & = & -\frac{1}{2}\|\dot{p}_{i+1}-\dot{p}_i \|^2
\end{eqnarray}

\begin{eqnarray*}
  \sum_{i=1}^n \dot{d}_i &= & -\frac{1}{2} \sum_{i=1}^n \|\dot{p}_{i+1}-\dot{p}_i \|^2\\
        &  \leq & -\frac{1}{2 n} (\sum_{i=1}^n \|\dot{p}_{i+1}-\dot{p}_i \|)^2
\end{eqnarray*}
where we used the Cauchy-Schwartz inequality
\begin{equation}\label{C-S}
  \sum_{i=1}^n a_i^2 \sum_{i=1}^n b_i^2 \geq (\sum_{i=1}^n a_i b_i)^2
\end{equation}
 with $a_i= \|\dot{p}_{i+1}-\dot{p}_i \|$ and $b_i=1$ for $i=1,....,n$.

Applying to $ \sum_{i=1}^n \|\dot{p}_{i+1}-\dot{p}_i \|$ the triangle inequality we have
\begin{equation}\label{sum_to_j}
  \sum_{i=1}^{j-1} \|\dot{p}_{i+1}-\dot{p}_i \| \geq \|\dot{p}_j-\dot{p}_1 \|
\end{equation}
where $j$ is some index such that $\|\dot{p}_j-\dot{p}_1 \| >1$, which according to (\ref{sum0}) must exist. Since all summands are $\geq 0$ we, have $ \sum_{i=1}^n \|\dot{p}_{i+1}-\dot{p}_i \| > 1$  and therefore
\begin{equation}\label{sum_d_dot}
  \sum_{i=1}^n \dot{d}_i < -\frac{1}{2 n}
\end{equation}
Integrating both sides of (\ref{sum_d_dot})  from $t_0$ to $T_m$ and recalling that at termination (mutual capture) time $\sum_{i=1}^n d_i(T_m)=0$,  we obtain
\begin{equation*}
  T_m < t_0+2n \sum_{i=1}^n d_i(t_0)
\end{equation*}
\end{proof}

\newpage
\bibliographystyle{abbrv}
\bibliography{BibGen}

\end{document}